\def\eqref#1{equation~\ref{#1}}
\def\1{\bm{1}}
\DeclareMathAlphabet{\mathsfit}{\encodingdefault}{\sfdefault}{m}{sl}
\SetMathAlphabet{\mathsfit}{bold}{\encodingdefault}{\sfdefault}{bx}{n}
\definecolor{darksalmon}{RGB}{224,49,49}
\definecolor{tablehead}{RGB}{121,80,242}
\definecolor{green(pigment)}{RGB}{47,158,68}
\definecolor{mydarkblue}{rgb}{0,0.08,0.45}
\definecolor{myblue}{HTML}{3b75c3}
\definecolor{myred}{HTML}{E33222}
\definecolor{mymaroon}{RGB}{142,27,19}
\definecolor{mycite}{cmyk}{0.55,1,0,0.15}
\definecolor{codeblue}{rgb}{0.25,0.5,0.5}
\definecolor{codekw}{rgb}{0.85, 0.18, 0.50}
\definecolor{codegreen}{rgb}{0,0.6,0}
\definecolor{codegray}{rgb}{0.5,0.5,0.5}
\definecolor{codepurple}{rgb}{0.58,0,0.82}
\definecolor{mygray}{gray}{0.925}
\tiny\color{codegray},
\newcommand{\ourmethod}{{\fontfamily{lmtt}\selectfont \textbf{PathHD}}\xspace}
\newcommand{\method}{\texttt{CS}\xspace}
\definecolor{mydarkblue}{rgb}{0,0.08,0.45}
\definecolor{mycite}{cmyk}{0.55,1,0,0.15}
\definecolor{myblue}{HTML}{00CDCD}
\definecolor{champagne}{rgb}{0.74, 0.83, 0.9}
\definecolor{champagne}{rgb}{0.97, 0.91, 0.81}
\providecommand{\customgenericname}{}
\newcommand{\newcustomtheorem}[2]{%
  \newenvironment{#1}[1]
  {%
   \renewcommand\customgenericname{#2}%
   \renewcommand\theinnercustomgeneric{##1}%
   \innercustomgeneric
  }
  {\endinnercustomgeneric}
}
\newcolumntype{I}{!{\vrule width 1pt}}
\newtheorem{theorem}{Theorem}
\newtheorem{lemma}{Lemma}
\newtheorem{proposition}{Proposition}
\newtheorem{corollary}{Corollary}
\newcommand{\bbind}{\mathbin{\raisebox{0.2ex}{\scalebox{0.9}{$\circledast$}}}} %
\newcommand{\bigbbind}{\mathop{\bbind}}
\theoremstyle{definition}
\def\part{\par
   \addvspace{2ex}
   \@afterindentfalse
   \secdef\@part\@spart}%
\def\@part[#1]#2{%
 \@ifnum{\c@secnumdepth >\m@ne}{%
        \refstepcounter{part}%
        \addcontentsline{toc}{part}{\thepart\hspace{1em}#1}%
 }{%
      \addcontentsline{toc}{part}{#1}%
 }%
   \nobreak
   \vskip 1ex
   \@afterheading
}%
\newcommand{\thickhline}{%
    \noalign {\ifnum 0=`}\fi \hrule height 1pt
    \futurelet \reserved@a \@xhline
}
\newenvironment{fullitemize}
{
\vspace{-1pt}
\begin{itemize}[leftmargin=*]
\setlength{\itemsep}{5pt}
\setlength{\parsep}{-5pt}
\setlength{\parskip}{-3pt}
\setlength{\leftmargin}{-10pt}
}
{
\end{itemize}
\vspace{-1pt}
}
\renewcommand \thepart{}
\title{Encoder-Free Knowledge-Graph Reasoning with LLMs via Hyperdimensional Path Retrieval}
\author{
  \textbf{Yezi Liu} \quad
  \textbf{William Youngwoo Chung} \quad
  \textbf{Hanning Chen} \\
  \textbf{Calvin Yeung} \quad
  \textbf{Mohsen Imani} \\
University of California, Irvine \\
  \texttt{\{yezil3,chungwy1,hanningc,chyeung2,m.imani\}@uci.edu}
}
\date{}
\begin{document}

\maketitle
\thispagestyle{empty}  %
\pagestyle{plain}      %
\begin{abstract}
Recent progress in large language models (LLMs) has made knowledge-grounded reasoning increasingly practical, yet KG-based QA systems often pay a steep price in efficiency and transparency.
In typical pipelines, symbolic paths are scored by neural encoders or repeatedly re-ranked by multiple LLM calls, which inflates latency and GPU cost and makes the decision process hard to audit. We introduce \textbf{\ourmethod}, an \emph{encoder-free} framework for knowledge-graph reasoning that couples \emph{hyperdimensional computing} (HDC) with a \emph{single} LLM call per query.
Given a query, \ourmethod represents relation paths as block-diagonal \emph{GHRR} hypervectors, retrieves candidate paths using a calibrated \emph{blockwise cosine similarity} with Top-$K$ pruning, and then performs a one-shot LLM adjudication that outputs the final answer together with supporting, citeable paths. The design is enabled by three technical components: (i) an order-sensitive, non-commutative binding operator for composing multi-hop paths, (ii) a robust similarity calibration that stabilizes hypervector retrieval, and (iii) an adjudication stage that preserves interpretability while avoiding per-path LLM scoring.
Across WebQSP, CWQ, and GrailQA, \ourmethod matches or improves Hits@1 compared to strong neural baselines while using only one LLM call per query, reduces end-to-end latency by \textbf{40-60\%}, and lowers GPU memory by \textbf{3-5$\times$} due to encoder-free retrieval.
Overall, the results suggest that carefully engineered HDC path representations can serve as an effective substrate for efficient and faithful KG-LLM reasoning, achieving a strong accuracy-efficiency-interpretability trade-off.
\end{abstract}

\section{Introduction}\label{sec:intro}
Large Language Models (LLMs) have rapidly advanced reasoning over both text and structured knowledge. Typical pipelines follow a \emph{retrieve–then–reason} pattern: they first surface evidence (documents, triples, or relation paths), then synthesize an answer using a generator or a verifier~\citep{lewis2020rag,press2022selfask,yao2023react,wei2022chain,yao2024tree}. In knowledge-graph question answering (KGQA), this often becomes \emph{path-based reasoning}: systems construct candidate relation paths that connect the topic entities to potential answers and pick the most plausible ones for final prediction~\citep{sun2018open,jiang2022unikgqa,jiang2023structgpt,jiang2024kg,luo2023reasoning}. While these approaches obtain strong accuracy on WebQSP, CWQ, and GrailQA, they typically depend on heavy neural encoders (e.g., Transformers or GNNs) or repeated LLM calls to rank paths, which makes them slow and expensive at inference time, especially when many candidates must be examined.

\begin{figure}[t] \centering \includegraphics[width=\linewidth]{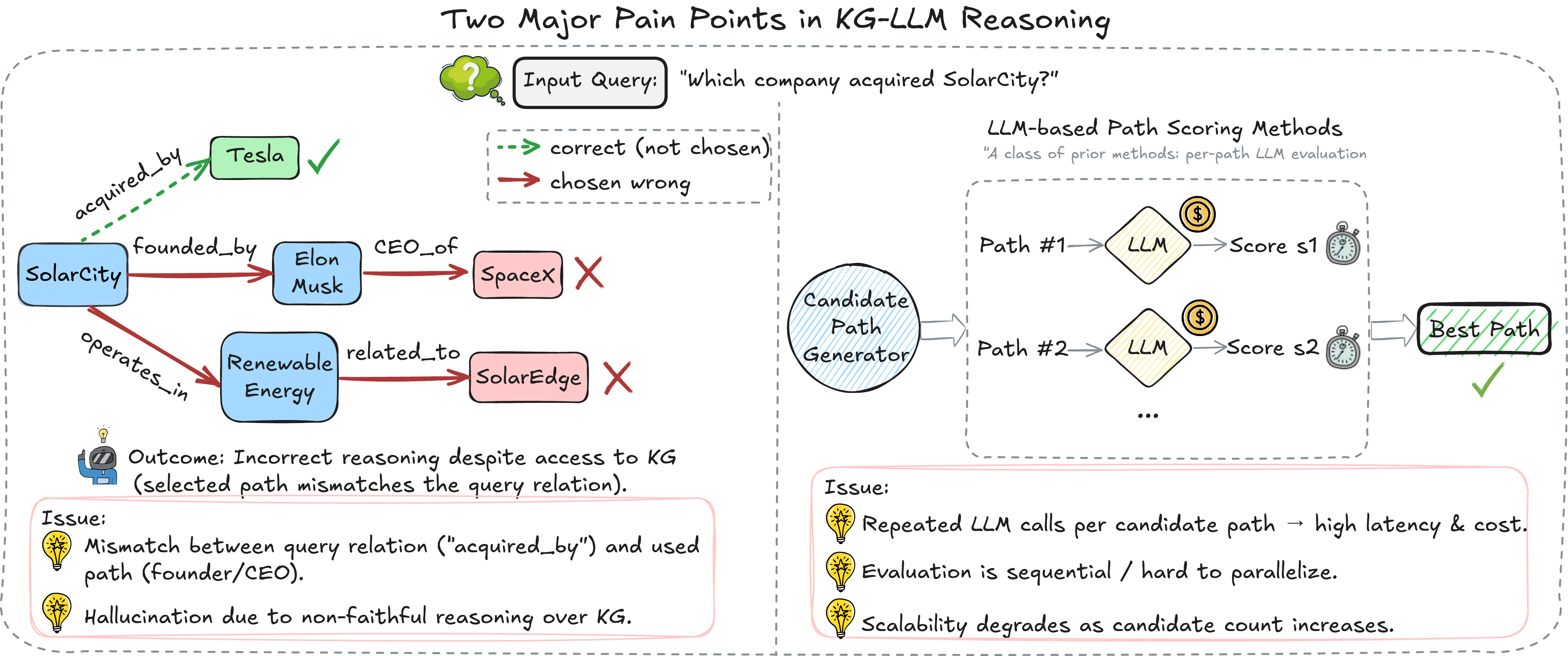} 
\vspace{-1.5em}
\caption{
\textbf{Two major pain points in KG-LLM reasoning.}
\textbf{Left:} A path-based KGQA system selects a candidate path whose relation sequence does not match the query relation (``acquired\_by''), leading to an incorrect answer even though the KG contains the correct evidence.
\textbf{Right:} LLM-based scoring evaluates each candidate path in a separate LLM call, which is sequential, hard to parallelize, and incurs high latency and token cost as the candidate set grows.
} \label{fig:intro-pain} 
\vspace{-1em}
\end{figure}

\Cref{fig:intro-pain} highlights two recurring issues in KG–LLM reasoning.
\ding{182} \textbf{Path–query mismatch:} Order-insensitive encodings, weak directionality, and noisy similarity often favor superficially related yet misaligned paths, blurring the question’s intended relation.
\ding{183} \textbf{Per-candidate LLM scoring:} Many systems score candidates sequentially, so latency and token cost grow roughly linearly with set size; batching is limited by context/API, and repeated calls introduce instability, yet models can still over-weight long irrelevant chains, hallucinate edges, or flip relation direction. Most practical pipelines first detect a topic entity, enumerate $10\!\sim\!100$ length-$1$–$4$ paths, then score each with a neural model or LLM, sending top paths to a final step~\citep{sun2018open,luo2023reasoning,jiang2024kg}. This hard-codes two inefficiencies: (i) \emph{neural scoring dominates latency} (fresh encoding/prompt per candidate), and (ii) \emph{loose path semantics} (commutative/direction-insensitive encoders conflate \textit{founded\_by}$\!\to$\textit{CEO\_of} with its reverse), which compounds on compositional/long-hop questions.

Hyperdimensional Computing (HDC) offers a different lens: represent symbols as long, nearly-orthogonal \emph{hypervectors} and manipulate structure with algebraic operations such as \emph{binding} and \emph{bundling}~\citep{kanerva2009hyperdimensional,plate1995holographic}. HDC has been used for fast associative memory, robust retrieval, and lightweight reasoning because its core operations are elementwise or blockwise and parallelize extremely well on modern hardware~\citep{frady2021variable}. Encodings tend to be noise-tolerant and compositional; similarity is computed by simple cosine or dot product; and both storage and computation scale linearly with dimensionality. Crucially for KGQA, HDC supports \emph{order-sensitive} composition when the binding operator is non-commutative, allowing a path like $r_1\!\to r_2\!\to r_3$ to be distinguished from its permutations while remaining a single fixed-length vector. This makes HDC a promising substrate for ranking many candidate paths without invoking a neural model for each one.

Motivated by these advantages, we introduce \textbf{\ourmethod} (Hyper\textbf{D}imensional \textbf{Path} Retrieval), a lightweight retrieval-and-reason framework for efficient KGQA with LLMs. First, we map every relation to a block-diagonal unitary representation and encode a candidate path by \emph{non-commutative} Generalized Holographic Reduced Representation (GHRR) binding~\citep{yeung2024generalized}; this preserves order and direction in a single hypervector. In parallel, we encode the query into the same space to obtain a \emph{query hypervector}. Second, we score \emph{all} candidates via cosine similarity to the query hypervector and keep only the top-$K$ paths with a simple, parallel Top-$K$ selection. Finally, instead of per-candidate LLM calls, we make \emph{one} LLM call that sees the question plus these top-$K$ paths (verbalized), and it outputs the answer along with cited supporting paths. In effect, \ourmethod addresses both pain points in~\Cref{fig:intro-pain}: order-aware binding reduces path–query mismatch, and vector-space scoring eliminates per-path LLM evaluation, cutting latency and token cost.

Our contributions are as follows:
\begin{fullitemize}
  \item[\ding{182}] \textbf{A fast, order-aware retriever for KG paths.} We present \ourmethod, which uses GHRR-based, non-commutative binding to encode relation \emph{sequences} into hypervectors and ranks candidates with plain cosine similarity, without learned neural encoders or per-path prompts. This design keeps a symbolic structure while enabling fully parallel scoring with $\mathcal{O}(Nd)$ complexity.
  \item[\ding{183}] \textbf{An efficient one-shot reasoning stage.} \ourmethod replaces many LLM scoring calls with a single LLM adjudication over the top-$K$ paths. This decouples retrieval from generation, lowers token usage, and improves wall-clock latency while remaining interpretable: the model cites the supporting path(s) it used.
  \item[\ding{184}] \textbf{Extensive validation and operator study.} On WebQSP, CWQ, and GrailQA, \ourmethod achieves competitive Hits@1 and F1 with markedly lower inference cost. An ablation on binding operators shows that our block-diagonal (GHRR) binding outperforms commutative binding and circular convolution, and additional studies analyze the impact of top-$K$ pruning and latency–accuracy trade-offs.
\end{fullitemize}

Overall, {\ourmethod} shows that carefully designed hyperdimensional representations can act as an encoder-free, training-free path scorer inside KG-based LLM reasoning systems, preserving competitive answer accuracy while substantially improving inference efficiency and providing explicit path-level rationales.

\section{Method}\label{sec:pipeline}
The proposed \ourmethod follows a \emph{Plan $\rightarrow$ Encode $\rightarrow$ Retrieve $\rightarrow$ Reason} pipeline (\Cref{fig:framework}). 
(i) We first generate or select relation \emph{plans} that describe how an answer can be reached (schema enumeration optionally refined by a light prompt). 
(ii) Each plan is mapped to a hypervector via a non-commutative GHRR binding so that order and direction are preserved. 
(iii) We compute a blockwise cosine similarity in the hypervector space and apply Top-$K$ pruning. 
(iv) Finally, a \emph{single} LLM call produces the answer with path-based explanations. 
This design keeps the heavy lifting in cheap vector operations, delegating semantic adjudication to one-shot LLM reasoning.

\subsection{Problem Setup \& Notation}
Given a question $q$, a knowledge graph (KG) $\mathcal{G}$, and a set of relation schemas $\mathcal{Z}$, the goal is to predict an answer $a$. Formally, we write $\mathcal{G} = (\mathcal{V}, \mathcal{E}, \mathcal{R})$, where $\mathcal{V}$ is the set of entities, $\mathcal{R}$ is the set of relation types, and $\mathcal{E} \subseteq \mathcal{V} \times \mathcal{R} \times \mathcal{V}$ is the set of directed edges $(e, r, e')$. We denote entities by $e \in \mathcal{V}$ and relations by $r \in \mathcal{R}$. A relation schema $z \in \mathcal{Z}$ is a sequence of relation types $z = (r_1,\dots,r_\ell)$. Instantiating a schema $z$ on $\mathcal{G}$ yields concrete KG paths of the form $(e_0, r_1, e_1, \dots, r_\ell, e_\ell)$ such that $(e_{i-1}, r_i, e_i) \in \mathcal{E}$ for all $i$. For a given question $q$, we denote by $\mathcal{P}(q)$ the set of candidate paths instantiated from schemas in $\mathcal{Z}$ and by $N = |\mathcal{P}(q)|$ its size. We write $d$ for the dimensionality of the hypervectors used to represent relations and paths.

A key challenge is to efficiently locate a small set of \emph{plausible} paths for $q$ from this large candidate pool, and then let an LLM reason over only those paths. A summary of the notation throughout the paper can be found in~\Cref{app:not}.

\begin{figure}[t]
  \centering
  \includegraphics[trim=30 300 30 20,clip,width=\linewidth]{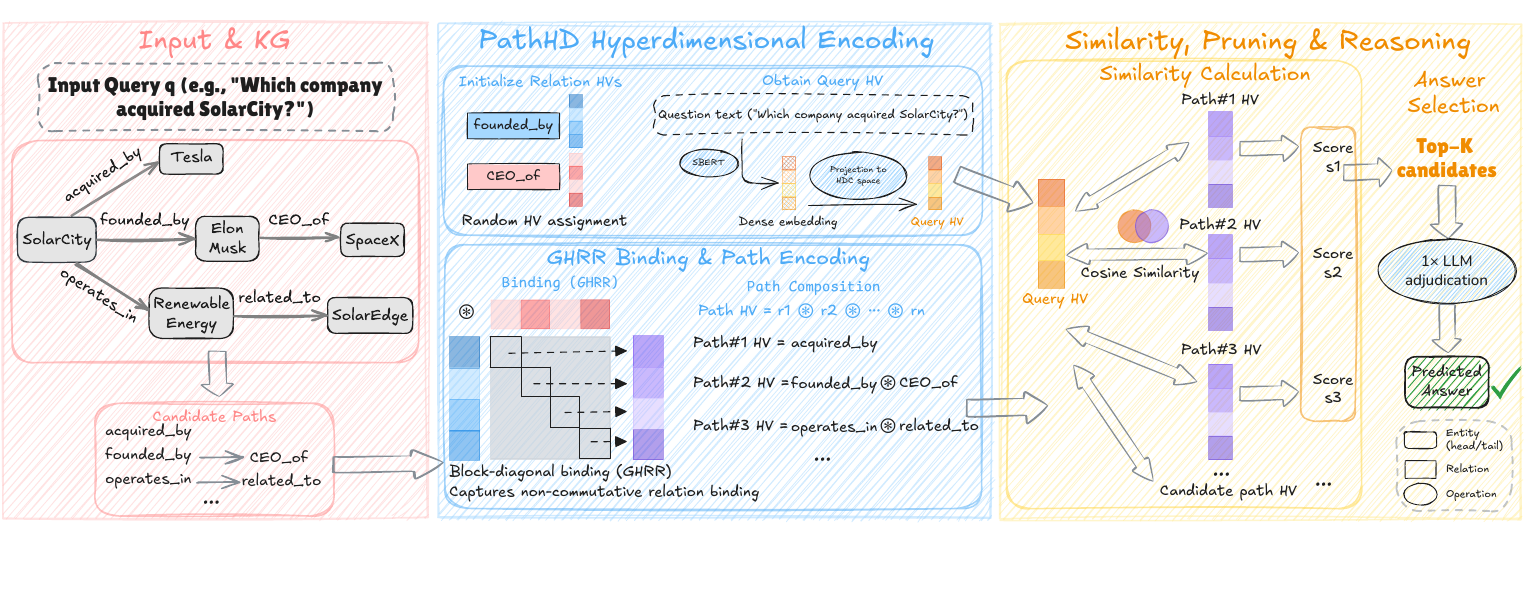}
  \vspace{-3em}
\caption{
Overview of {\ourmethod}: a \emph{Plan} $\rightarrow$ \emph{Encode} $\rightarrow$
\emph{Retrieve} $\rightarrow$ \emph{Reason} pipeline.
A schema-based planner first generates relation plans over the KG; {\ourmethod} encodes these
plans and instantiates candidate paths into order-aware GHRR hypervectors, ranks
candidates with blockwise cosine similarity and Top-$K$ selection, and then issues a
\emph{single} LLM adjudication call to answer with cited paths, with most computation
handled by vector operations and modest LLM use.
}\label{fig:framework}
\end{figure}

\subsection{Hypervector Initialization}
\label{sec:hv-init}
We work in a Generalized Holographic Reduced Representations (GHRR) space.
Each atomic symbol $x$ (relation or, optionally, entity) is assigned a
$d$-dimensional hypervector $\mathbf{v}_x\!\in\!\mathbb{C}^{d}$ constructed as a
block vector of unitary matrices:
\begin{equation}
\mathbf{v}_x \;=\; [A^{(x)}_{1};\dots;A^{(x)}_{D}],\qquad
A^{(x)}_{j}\in \mathrm{U}(m),\;\; d=Dm^2.
\end{equation}
In practice, we sample each block from a simple unitary family for efficiency, e.g.,
\[
A^{(x)}_{j}=\operatorname{diag}\!\big(e^{i\phi_{j,1}},\ldots,e^{i\phi_{j,m}}\big),\qquad
\phi_{j,\ell}\sim\operatorname{Unif}[0,2\pi),
\]
or a random Householder product.
Blocks are $\ell_2$-normalized so that all hypervectors have unit norm.
This initialization yields near-orthogonality among symbols, which concentrates
with dimension (cf. Prop.~\ref{prop:near-orth}). Hypervectors are sampled once
and kept fixed; the retriever itself has no learned parameters.

\paragraph{Query hypervector.}
For a question $q$, we obtain a query hypervector in two ways, depending on the
planning route used in \Cref{sec:pipeline}:
(i) \emph{plan-based}—encode the selected relation plan
$z_q=(r_1,\dots,r_\ell)$ using the same GHRR binding as paths
(see Eq.~\eqref{eq:ghrr-path}); or
(ii) \emph{text-projection}—embed $q$ with a sentence encoder (e.g., SBERT) to
$\mathbf{h}_q \in \mathbb{R}^{d_t}$ and project it to the HDC space using a
fixed random linear map $P \in \mathbb{R}^{d \times d_t}$, then block-normalize:
\begin{equation}
\mathbf{v}_q \;=\; \mathcal{N}_{\text{block}}\!\big(P\,\mathbf{h}_q\big).
\label{eq:query-hv}
\end{equation}
Both choices produce a query hypervector compatible with GHRR scoring.
Unless otherwise specified, we use the plan-based encoding in all main
experiments, and report the text-projection variant only in ablations
(\Cref{app:ablation_text_pro}).

\subsection{GHRR Binding and Path Encoding}
\label{sec:ghrr-binding}
A GHRR hypervector is a block vector $\mathbf{H} = [A_1;\dots;A_D]$ with
$A_j \in \mathrm{U}(m)$. Given two hypervectors
$\mathbf{X} = [X_1;\dots;X_D]$ and $\mathbf{Y} = [Y_1;\dots;Y_D]$, we define
the block-wise binding operator $\bbind$ and the encoding of a length-$\ell$
relation path $z = (r_1,\dots,r_\ell)$ by:
\begin{equation}
\mathbf{v}_z
\;=\;
\mathbf{v}_{r_1} \bbind \mathbf{v}_{r_2} \bbind \cdots \bbind \mathbf{v}_{r_\ell},
\qquad
\mathbf{X} \bbind \mathbf{Y} = [X_1 Y_1;\dots;X_D Y_D],
\label{eq:ghrr-path}
\end{equation}
followed by block-wise normalization to unit norm.
Binding is applied left-to-right along the path, and because the matrix
multiplication is non-commutative ($X_j Y_j \neq Y_j X_j$), the encoding
preserves the order and directionality of relations, which are critical for
multi-hop KG reasoning.

\paragraph{Properties and choice of binding operator.}
Although {\ourmethod} only uses forward binding for retrieval, GHRR also supports
approximate unbinding: for $Z_j = X_j Y_j$ with unitary blocks, we have
$X_j \approx Z_j Y_j^\ast$ and $Y_j \approx X_j^\ast Z_j$.
This property enables inspection of the contribution of individual relations in
a composed path and underpins our path-level rationales.

Classical HDC bindings (XOR, element-wise multiplication, circular convolution)
are \emph{commutative}, which collapses $r_1{\to}r_2$ and $r_2{\to}r_1$ to
similar codes and hurts directional reasoning.
GHRR is non-commutative, invertible at the block level, and offers higher
representational capacity via unitary blocks, leading to better discrimination
between paths of the same multiset but different order.
We empirically validate this choice in the ablation study
(\Cref{tab:ablate-binding}), where GHRR consistently outperforms commutative
bindings. Additional background on binding operations is provided in
\Cref{app:binding}.

\subsection{Query \& Candidate Path Construction}
\label{sec:path_construction}
We obtain a query plan $z_q$ via schema-based enumeration on the
relation-schema graph (depth $\le L_{\max}$).
In all main experiments reported in \Cref{sec:exp}, this planning stage
is purely symbolic: we do \emph{not} invoke any LLM beyond the single final
reasoning call in \Cref{sec:reasoning}.
The query hypervector $\mathbf{v}_q$ is then constructed from the selected plan
$z_q$ using the plan-based encoding described above
(Eq.~\eqref{eq:ghrr-path} or, for text projection, Eq.~\eqref{eq:query-hv}).

Given a query plan, candidate paths $\mathcal{Z}$ are instantiated from the KG
either by matching plan templates to existing edges or by a constrained BFS
with beam width $B$, both of which yield symbolic paths.
These paths are then deterministically encoded into hypervectors and scored by
our HDC module (Sec.~\ref{sec:hd-retrieval}).
An optional lightweight prompt-based refinement of schema plans is described in
the appendix as an extension; it is \emph{not} used in our main experiments and
does not change the single-call nature of the system.

\subsection{HD Retrieval: Blockwise Similarity and Top-$K$}
\label{sec:hd-retrieval}
Let $\langle A,B\rangle_F := \mathrm{tr}(A^\ast B)$ be the Frobenius inner
product. Given two GHRR hypervectors
$\mathbf{X}=[X_j]_{j=1}^D$ and $\mathbf{Y}=[Y_j]_{j=1}^D$, we define the
\emph{blockwise cosine similarity}
\begin{equation}
\mathrm{sim}(\mathbf{X},\mathbf{Y})
=\frac{1}{D}\sum_{j=1}^{D}\,
\frac{\Re\,\langle X_j,\,Y_j\rangle_F}{\|X_j\|_F\,\|Y_j\|_F}.
\label{eq:block-cos}
\end{equation}
For each candidate $z\in\mathcal{Z}$ we compute
$\mathrm{sim}(\mathbf{v}_q,\mathbf{v}_z)$ and (optionally) apply a calibrated
score
\begin{equation}
s(z) \;=\; \mathrm{sim}(\mathbf{v}_q,\mathbf{v}_z)\;+\;\alpha\,\mathrm{IDF}(z)\;-\;\beta\,\lambda^{|z|},
\label{eq:idf_hyper}
\end{equation}
where $\mathrm{IDF}(z)$ is a simple inverse-frequency weight on relation
schemas. Let $\text{schema}(z)$ denote the relation schema of path $z$ and
$\mathrm{freq}(\text{schema}(z))$ be the number of training questions whose
candidate sets contain at least one path with the same schema. With
$N_{\text{train}}$ the total number of training questions, we define:
\begin{equation}
    \mathrm{IDF}(z)
    = \log\!\left(1 + \frac{N_{\text{train}}}{1 + \mathrm{freq}(\text{schema}(z))}\right).
\end{equation}
Thus, frequent schemas (large $\mathrm{freq}(\text{schema}(z))$) receive a
smaller bonus, while rare schemas receive a larger one.
All similarity scores and calibrated scores can be computed fully in parallel
over candidates, with overall cost $\mathcal{O}(|\mathcal{Z}|d)$.

\subsection{One-shot Reasoning with Retrieved Paths}
\label{sec:reasoning}
Putting the pieces together, {\ourmethod} turns a question into
(i) a schema-level plan $z_q$,
(ii) a set of candidate paths ranked in hypervector space, and
(iii) a single LLM call that adjudicates among the top-ranked candidates.

We linearize the Top-$K$ paths into concise natural-language statements and
issue a \emph{single} LLM call with a minimal, citation-style prompt
(see~\Cref{tab:prompt-template} from~\Cref{app:prompt}).
The prompt lists the question and the numbered paths, and requires the model to
return a short answer, the index(es) of supporting path(s), and a 1-2 sentence
rationale.
This one-shot format constrains reasoning to the provided evidence, resolves
near-ties and direction errors, and keeps LLM usage minimal.

\subsection{Theoretical \& Complexity Analysis}
\begin{wraptable}{r}{0.47\textwidth} 
\vspace{-3em}
 \centering
\small
\renewcommand{\arraystretch}{1.0}
\resizebox{0.47\textwidth}{!}{
\begin{tabular}{l||ccc}
\Xhline{1.2pt}
\rowcolor{tablehead!20} \textbf{Method} & \makecell{\textbf{Candidate}\\\textbf{Path Gen.}} & \textbf{Scoring} & \textbf{Reasoning} \\
\hline\hline
\rowcolor{gray!10}StructGPT~\citep{jiang2023structgpt}  & \textcolor{green(pigment)}{\Checkmark} & \textcolor{green(pigment)}{\Checkmark} & \textcolor{green(pigment)}{\Checkmark} \\
FiDeLiS~\citep{sui2024fidelis}        & \textcolor{green(pigment)}{\Checkmark} & \textcolor{darksalmon}{\XSolidBrush} & \textcolor{green(pigment)}{\Checkmark} \\
\rowcolor{gray!10}ToG~\citep{sun2023think}              & \textcolor{green(pigment)}{\Checkmark} & \textcolor{green(pigment)}{\Checkmark} & \textcolor{green(pigment)}{\Checkmark} \\
GoG~\citep{xu2024generate}            & \textcolor{green(pigment)}{\Checkmark} & \textcolor{green(pigment)}{\Checkmark} & \textcolor{green(pigment)}{\Checkmark} \\
\rowcolor{gray!10}KG-Agent~\citep{jiang2024kg}          & \textcolor{green(pigment)}{\Checkmark} & \textcolor{green(pigment)}{\Checkmark} & \textcolor{green(pigment)}{\Checkmark} \\
RoG~\citep{luo2023reasoning}          & \textcolor{green(pigment)}{\Checkmark} & \textcolor{darksalmon}{\XSolidBrush} & \textcolor{green(pigment)}{\Checkmark} \\
\rowcolor[HTML]{D7F6FF}\ourmethod & \textcolor{darksalmon}{\XSolidBrush} & \textcolor{darksalmon}{\XSolidBrush} & \textcolor{green(pigment)}{\Checkmark} ($1$ call) \\
\Xhline{1.2pt}
\end{tabular}}
\vspace{-0.3em}
\caption{
LLM usage across pipeline stages. A checkmark indicates that the method uses an LLM in that stage.
\emph{Candidate Path Gen.}: using an LLM to propose or expand relation paths;
\emph{Scoring}: using an LLM to score or rank candidates (non-LLM similarity or graph heuristics count as ``no'');
\emph{Reasoning}: using an LLM to produce the final answer from the retrieved paths.
\ourmethod uses a single LLM call only in the final reasoning stage.
}
\label{tab:llm-usage}
\vspace{-1.5em}
\end{wraptable}
We briefly characterize the behavior of random GHRR hypervectors and the
computational cost of {\ourmethod}.

\begin{proposition}[Near-orthogonality and distractor bound]
\label{prop:near-orth}
Let $\{\mathbf{v}_r\}$ be i.i.d.\ GHRR hypervectors with zero-mean, unit
Frobenius-norm blocks. 
For a query path $z_q$ and any distractor $z\neq z_q$ encoded via
non-commutative binding, the cosine similarity
$X=\mathrm{sim}(\mathbf{v}_{z_q},\mathbf{v}_{z})$ (\Cref{eq:block-cos})
satisfies, for any $\epsilon>0$,
\begin{equation}
\Pr\!\left(|X|\ge \epsilon\right) \le 2\exp\!\left(-c\, d\, \epsilon^2\right),
\end{equation}
for an absolute constant $c>0$ depending only on the sub-Gaussian proxy of
entries.
\end{proposition}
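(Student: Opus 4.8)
The plan is to show that the blockwise cosine similarity $X = \mathrm{sim}(\mathbf{v}_{z_q}, \mathbf{v}_z)$ is an average of $D$ weakly-dependent, mean-zero, sub-Gaussian block contributions, and then apply a Hoeffding/Bernstein-type concentration inequality. First I would write out $X = \frac{1}{D}\sum_{j=1}^D \Re\langle (V_{z_q})_j, (V_z)_j\rangle_F / (\|(V_{z_q})_j\|_F \|(V_z)_j\|_F)$, where $(V_z)_j = \prod_{i} (\mathbf{v}_{r_i})_j$ is the ordered product of the $j$-th blocks of the relations along path $z$ (and similarly for $z_q$), with the product taken in path order. Because the initialization samples each relation's blocks independently across both the relation index and the block index $j$, the $D$ summands are \emph{independent} across $j$ (each block coordinate only ever sees its own slice of every relation's hypervector). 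After block normalization each summand lies in $[-1,1]$ deterministically (it is the real part of a normalized Frobenius inner product, hence bounded by $1$ in absolute value by Cauchy–Schwarz).

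The key step is the mean-zero claim: $\mathbb{E}[X_j] = 0$ for every $j$, where $X_j = \Re\langle (V_{z_q})_j, (V_z)_j\rangle_F/(\text{norms})$. Since $z \ne z_q$, the ordered relation sequences differ; I would argue that at least one relation hypervector appears in the product $(V_z)_j$ with a different ``role'' (position, or presence/absence, or — if paths share a multiset but differ in order — a different neighbor) than in $(V_{z_q})_j$. Conditioning on all other relation blocks, the offending block $A$ is a Haar-random (or random-phase-diagonal) unitary, and $\langle (V_{z_q})_j, (V_z)_j\rangle_F$ becomes a linear functional of $A$ of the form $\mathrm{tr}(M_1 A M_2 A^\ast M_3)$ or $\mathrm{tr}(M A)$ after collecting the deterministic factors; in either case, averaging over the phases (or using $\mathbb{E}[A]=0$, or the first-order Weingarten identity $\mathbb{E}[A_{ab}\overline{A_{cd}}] = \frac{1}{m}\delta_{ac}\delta_{bd}$) kills the expectation or, in the quadratic case, shows the expectation of the \emph{product} structure is cancelled by the corresponding term in $(V_{z_q})_j$ under the non-commutative binding — this non-commutativity is exactly what guarantees the permuted path does not produce the same expectation. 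For the random-phase-diagonal family this is a short explicit computation; for Haar unitaries it is the Weingarten calculus at first order.

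With independence across $j$, $|X_j|\le 1$ almost surely, and $\mathbb{E}[X_j]=0$, Hoeffding's inequality gives $\Pr(|X|\ge\epsilon) = \Pr\big(|\frac{1}{D}\sum_j X_j|\ge\epsilon\big)\le 2\exp(-D\epsilon^2/2)$; since $d = Dm^2$, this is $2\exp(-c\, d\,\epsilon^2)$ with $c = 1/(2m^2)$, and the sub-Gaussian-proxy refinement (replacing the crude $[-1,1]$ bound with the actual variance proxy $\sigma^2 = \Theta(1/m)$ of each $X_j$, via a Bernstein or sub-Gaussian-MGF argument) improves the constant to the form stated. The main obstacle I anticipate is the mean-zero / variance-proxy computation for the \emph{general} path pair under non-commutative binding: handling an arbitrary mismatch between two ordered sequences (different lengths, shared sub-blocks, repeated relations) requires a clean structural lemma — probably "freeze everything, isolate the first position where the two ordered products differ, and use the left/right unitary invariance of the Haar measure (or uniform phases) to decouple" — and making the bounded-difference / sub-Gaussian proxy uniform over all such mismatches is where the real care is needed; the concentration step itself is then routine.
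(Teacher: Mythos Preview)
Your proposal is correct and follows essentially the same route as the paper's sketch: exhibit the blockwise similarity as a mean-zero average of bounded/sub-Gaussian independent terms and invoke Hoeffding/Bernstein. The only notable difference is granularity---you aggregate at the block level ($D$ independent summands in $[-1,1]$, yielding $c=1/(2m^2)$ before the sub-Gaussian-proxy refinement you outline), whereas the paper's appendix proof writes $X$ directly as an average over $d$ entry-level degrees of freedom; your version is in fact more explicit than the paper's sketch about where the mean-zero argument needs care (permuted paths, shared sub-blocks, repeated relations).
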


\begin{proof}[Proof sketch]
Each block inner product $\langle X_j,Y_j\rangle_F$ is a sum of products of
independent sub-Gaussian variables (closed under products for the bounded/phase
variables used by GHRR). After normalization, the average in
\Cref{eq:block-cos} is a mean-zero sub-Gaussian average over $d$ degrees of
freedom, so a standard Bernstein/Hoeffding tail bound applies. Details are
deferred to \Cref{app:proofs}.
\end{proof}

\begin{corollary}[Capacity with union bound]
\label{cor:capacity}
Let $\mathcal{M}$ be a collection of $M$ distractor paths scored against a
fixed query. With probability at least $1-\delta$,
\begin{equation}
\max_{z\in\mathcal{M}} \mathrm{sim}(\mathbf{v}_{z_q},\mathbf{v}_{z}) \le \epsilon
\quad\text{whenever}\quad
d \;\ge\; \frac{1}{c\,\epsilon^2}\,\log\!\frac{2M}{\delta}.
\end{equation}
\end{corollary}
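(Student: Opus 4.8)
The plan is to obtain the corollary from Proposition~\ref{prop:near-orth} by a direct union bound over the $M$ distractors, the only point requiring care being that the tail bound there holds with a constant $c$ that does not depend on the particular distractor.

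First I would fix the query path $z_q$ and write $\mathcal{M}=\{z_1,\dots,z_M\}$. For each $z_i\neq z_q$, Proposition~\ref{prop:near-orth} applied to the pair $(z_q,z_i)$ gives
\[
\Pr\!\big(|\mathrm{sim}(\mathbf{v}_{z_q},\mathbf{v}_{z_i})|\ge \epsilon\big)\;\le\;2\exp(-c\,d\,\epsilon^2),
\]
with the \emph{same} absolute constant $c$ for every $i$, since by hypothesis $c$ depends only on the common sub-Gaussian proxy of the block entries and not on the relation sequence being encoded. A union bound over these $M$ events then yields
\[
\Pr\!\Big(\max_{z\in\mathcal{M}}|\mathrm{sim}(\mathbf{v}_{z_q},\mathbf{v}_{z})|\ge \epsilon\Big)\;\le\;2M\exp(-c\,d\,\epsilon^2).
\]
Next I would force the right-hand side to be at most $\delta$: solving $2M\exp(-c\,d\,\epsilon^2)\le\delta$ for $d$ is elementary and gives precisely $d\ge \frac{1}{c\,\epsilon^2}\log\frac{2M}{\delta}$, the stated threshold. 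Under this condition the complementary event holds with probability at least $1-\delta$; on that event every $|\mathrm{sim}(\mathbf{v}_{z_q},\mathbf{v}_{z})|<\epsilon$, and in particular $\max_{z\in\mathcal{M}}\mathrm{sim}(\mathbf{v}_{z_q},\mathbf{v}_{z})\le\epsilon$ (bounding the absolute value suffices, since the statement only needs the signed similarity below $\epsilon$).

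There is no real obstacle here, but the point worth emphasizing is that \emph{no independence among the $M$ candidate similarities is required}: although distractor paths all share the query hypervector and may reuse individual relation hypervectors, the union bound only consumes the marginal tail estimates supplied by Proposition~\ref{prop:near-orth}. The only genuine prerequisite is the uniformity of $c$ over distractors — which is exactly why the proposition is phrased with a constant depending solely on the sub-Gaussian proxy of the entries — so the union step is immediate once that proposition is in hand.
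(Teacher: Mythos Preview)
Your proposal is correct and matches the paper's approach essentially verbatim: apply the per-distractor tail bound of Proposition~\ref{prop:near-orth}, take a union bound over the $M$ distractors, and solve $2M\exp(-c\,d\,\epsilon^{2})\le\delta$ for $d$. The paper treats this as an immediate corollary (and spells out the identical union-bound argument in the Rademacher toy model of Appendix~\ref{app:theory-support}), so there is nothing to add.
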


Thus, the probability of a false match under random hypervectors decays
exponentially with the dimension $d$, and the required dimension scales as
$d=\mathcal{O}(\epsilon^{-2}\log M)$ for a target error tolerance $\epsilon$.

\paragraph{Complexity comparison with neural retrievers.}
Let $N$ be the number of candidates, $d$ the embedding dimension, and $L$ the
number of encoder layers used by a neural retriever.
A typical neural encoder (e.g., Transformer-based path encoder as in RoG)
incurs $\mathcal{O}(N L d^2)$ cost for encoding and scoring.
In contrast, \ourmethod forms each path vector by $|z|\!-\!1$ block
multiplications plus one similarity in \Cref{eq:block-cos}, i.e.,
$\mathcal{O}(|z|d)+\mathcal{O}(d)$ per candidate, giving a total of
$\mathcal{O}(Nd)$ and an $\mathcal{O}(L d)$-fold reduction in leading order.

Beyond the $\mathcal{O}(Nd)$ vs.\ $\mathcal{O}(N L d^2)$ compute gap,
end-to-end latency is dominated by the number of LLM calls.
\Cref{tab:llm-usage} contrasts pipeline stages across methods: unlike prior
agents that query an LLM for candidate path generation and sometimes for
scoring, \ourmethod defers a single LLM call to the final reasoning step.
This design reduces both latency and API cost; empirical results in
\Cref{sec:efficiency} confirm the shorter response times.

\section{Experiments}\label{sec:exp}
We evaluate \ourmethod against state-of-the-art baselines on reasoning accuracy,
measure efficiency with a focus on end-to-end latency, and conduct module-wise
ablations followed by illustrative case studies.

\subsection{Datasets, Baselines, and Setup}
We evaluate on three standard multi-hop KGQA benchmarks:
\textbf{WebQuestionsSP (WebQSP)}~\citep{yih2016value},
\textbf{Complex WebQuestions (CWQ)}~\citep{talmor2018web},
and \textbf{GrailQA}~\citep{gu2021beyond}, all grounded in
\textbf{Freebase}~\citep{bollacker2008freebase}.
These datasets span increasing reasoning complexity (roughly 2-4 hops):
WebQSP features simpler single-turn queries, CWQ adds compositional and
constraint-based questions, and GrailQA stresses generalization across i.i.d.,
compositional, and zero-shot splits.
Our study is therefore scoped to Freebase-style KGQA, extending {\ourmethod} to
domain-specific KGs are left for future work.

We compare against four families of methods:
\textbf{embedding-based}, \textbf{retrieval-augmented},
\textbf{pure LLMs} (no external KG), and \textbf{LLM+KG hybrids}.
All results are reported on dev (IID) splits under a unified Freebase evaluation
protocol using the official \textit{Hits@1} and \textit{F1} scripts, so that numbers are directly comparable across systems.
Unless otherwise noted, \ourmethod uses the schema-based planner from
\Cref{sec:path_construction} (no LLM calls during planning), the plan-based
query hypervector in \Cref{eq:ghrr-path}, and a single LLM adjudication call as
described in \Cref{sec:reasoning}, with all LLMs and sentence encoders used
\emph{off the shelf} (no fine-tuning).
Detailed dataset statistics, baseline lists, model choices, and additional
training-free hyperparameters are provided in
\Cref{appendix:datasets,appendix:baselines,app:setup}.

\subsection{Reasoning Performance Comparison}
\begin{table}[t]
\centering
\small
        \setlength{\tabcolsep}{4pt}
        \renewcommand{\arraystretch}{1.0}
\begin{tabular}{ll||ccIccIcc}
\Xhline{1.2pt}
\rowcolor{tablehead!20}  &&
\multicolumn{2}{cI}{\textbf{WebQSP}} & 
\multicolumn{2}{cI}{\textbf{CWQ}} & 
\multicolumn{2}{c}{\textbf{GrailQA (F1)}}\\
\cline{3-4}\cline{5-6}\cline{7-8}
 \rowcolor{tablehead!20} \multirow{-2}{*}{\textbf{Type}}&\multirow{-2}{*}{\textbf{Methods}} & {\textbf{Hits@$1$}} & {\textbf{F1}} & {\textbf{Hits@$1$}} & {\textbf{F1}} & {\textbf{Overall}} & {\textbf{IID}}   \\
\hline\hline
\rowcolor{gray!10}&KV-Mem \citep{miller2016key}           & $46.7$ & $34.5$ & $18.4$ & $15.7$ &  $-$  &  $-$  \\
&EmbedKGQA \citep{saxena2020improving}         & $66.6$ &  $-$  & $45.9$ &  $-$  &  $-$  &  $-$  \\
\rowcolor{gray!10}&NSM \citep{he2021improving}               & $68.7$ & $62.8$ & $47.6$ & $42.4$ &  $-$  &  $-$  \\
\multirow{-4}{*}{Embedding}& TransferNet \citep{shi2021transfernet}       & $71.4$ &  $-$  & $48.6$ &  $-$  &  $-$  &  $-$  \\
\hline
\rowcolor{gray!10}  &GraftNet \citep{sun2018open}         & $66.4$ & $60.4$ & $36.8$ & $32.7$ &  $-$  &  $-$  \\
& SR+NSM \citep{zhang2022subgraph}           & $68.9$ & $64.1$ & $50.2$ & $47.1$ &  $-$  &  $-$  \\
\rowcolor{gray!10}& SR+NSM+E2E \citep{zhang2022subgraph} & $69.5$ & $64.1$ & $49.3$ & $46.3$ &  $-$ & $-$ \\
\rowcolor{gray!10}\multirow{-4}{*}{Retrieval}  &UniKGQA \citep{jiang2022unikgqa}           & $77.2$ & $72.2$ & $51.2$ & $49.1$ &  $-$  &  $-$  \\
\hline
& ChatGPT \citep{ouyang2022training}         & $67.4$ & $59.3$ & $47.5$ & $43.2$ & $25.3$ & $19.6$ \\
\rowcolor{gray!10}& Davinci\text{-}003 \citep{ouyang2022training} & $70.8$ & $63.9$ & $51.4$ & $47.6$ & $30.1$ & $23.5$\\
\multirow{-3}{*}{Pure LLMs} & GPT\text{-}4 \citep{achiam2023gpt}        & $73.2$ & $62.3$ & $55.6$ & $49.9$ & $31.7$ & $25.0$ \\
\hline
\rowcolor{gray!10}&StructGPT \citep{jiang2023structgpt} & $72.6$ & $63.7$ & $54.3$ & $49.6$ & $54.6$ & $70.4$ \\
& ROG \citep{luo2023reasoning}               & $\underline{85.7}$ & $70.8$ & $62.6$ & $56.2$ &  $-$   &  $-$  \\
\rowcolor{gray!10}& Think-on-Graph \citep{sun2023think}    & $81.8$ & $76.0$ & $68.5$ & $60.2$ &  $-$  &  $-$  \\
& GoG \citep{xu2024generate}      & $84.4$ &  $-$  & $\mathbf{75.2}$ &  $-$  &  $-$  &  $-$  \\
\rowcolor{gray!10}&KG-Agent  \citep{jiang2024kg}       & $83.3$ & $\mathbf{81.0}$ & $\underline{72.2}$ & $\mathbf{69.8}$ & $\underline{86.1}$ & $\underline{92.0}$ \\
& FiDeLiS \citep{sui2024fidelis}  & $84.4$ & $78.3$ & $71.5$ & $64.3$ &  $-$  &  $-$  \\
\rowcolor[HTML]{D7F6FF}\multirow{-7}{*}{LLMs + KG}&\textbf{\ourmethod} & $\mathbf{86.2}$ & $\underline{78.6}$ & $71.5$ & $\underline{65.8}$ & $\mathbf{86.7}$ & $\mathbf{92.4}$\\
\Xhline{1.2pt}
\end{tabular}
\caption{\textbf{Comparison on Freebase-based KGQA.} Our method {\ourmethod} follows exactly the same protocol. ``--'' indicates that the metric was \emph{not reported by the original papers under the Freebase+official-script setting}. We bold the best and underline the second-best score for each metric/column.}
\label{tab:main_results}
\end{table}
We evaluate under a unified Freebase protocol with the official
\textit{Hits@1}/\textit{F1} scripts on WebQSP, CWQ, and GrailQA (dev, IID);
results are in \Cref{tab:main_results}.
Baselines cover classic KGQA (embedding/retrieval), recent LLM+KG systems, and
pure LLMs (no KG grounding).
Our \ourmethod uses hyperdimensional scoring with GHRR, Top-$K$ pruning, and a
\emph{single} LLM adjudication step.

Key observations are as follows.
\textbf{Obs.\ding{182}} \textbf{SOTA on WebQSP/GrailQA; competitive on CWQ.}
\ourmethod attains best WebQSP \textit{Hits@1} ($86.2$) and best GrailQA
\textit{F1} (Overall/IID $86.7/92.4$), while staying strong on CWQ
(\textit{Hits@1} $71.5$, \textit{F1} $65.8$), close to the top LLM+KG systems
(e.g., GoG $75.2$ \textit{Hits@1}; KG-Agent $69.8$ \textit{F1}).
\textbf{Obs.\ding{183}} \textbf{One-shot adjudication rivals multi-step agents.}
Compared to RoG ($\sim$12 calls) and Think-on-Graph/GoG/KG-Agent ($3$–$8$
calls), \ourmethod matches or exceeds accuracy on WebQSP/GrailQA and remains
competitive on CWQ with just \emph{one} LLM call, which reduces error
compounding and focuses the LLM on a high-quality shortlist.
\textbf{Obs.\ding{184}} \textbf{Pure LLMs lag without KG grounding.}
Zero/few-shot GPT-4 or ChatGPT underperform LLM+KG systems, e.g., on CWQ
GPT-4 \textit{Hits@1} $55.6$ vs.\ \ourmethod $71.5$.
\textbf{Obs.\ding{185}} \textbf{Classic embedding/retrieval trails modern LLM+KG.} KV-Mem, NSM, SR+NSM rank subgraphs well but lack a flexible language component
for composing multi-hop constraints, yielding consistently lower scores.

\noindent\textbf{Candidate enumeration strategy.}
Before turning to efficiency (\Cref{sec:efficiency}), we briefly clarify how candidate paths are enumerated, since this affects both accuracy and cost.
In our current implementation, we use a deterministic BFS-style enumeration of
relation paths, controlled by the maximum depth $L_{\max}$ and beam width $B$.
This choice is (i) simple and efficient, (ii) guarantees coverage of all
type-consistent paths up to length $L_{\max}$ under clear complexity bounds,
and (iii) makes it easy to compare against prior KGQA baselines that also rely
on BFS-like expansion.
In practice, we choose $L_{\max}$ and $B$ to achieve high coverage of gold
answer paths while keeping candidate set sizes comparable to RoG and KG-Agent.
More sophisticated, adaptive enumeration strategies (e.g., letting the HDC scores or the LLM guide, which relations to expand next) are an interesting
extension, but orthogonal to our core contribution.

\subsection{Efficiency and Cost Analysis}\label{sec:efficiency}
\begin{figure}[t]
	\centering
	\includegraphics[width=0.32\textwidth]{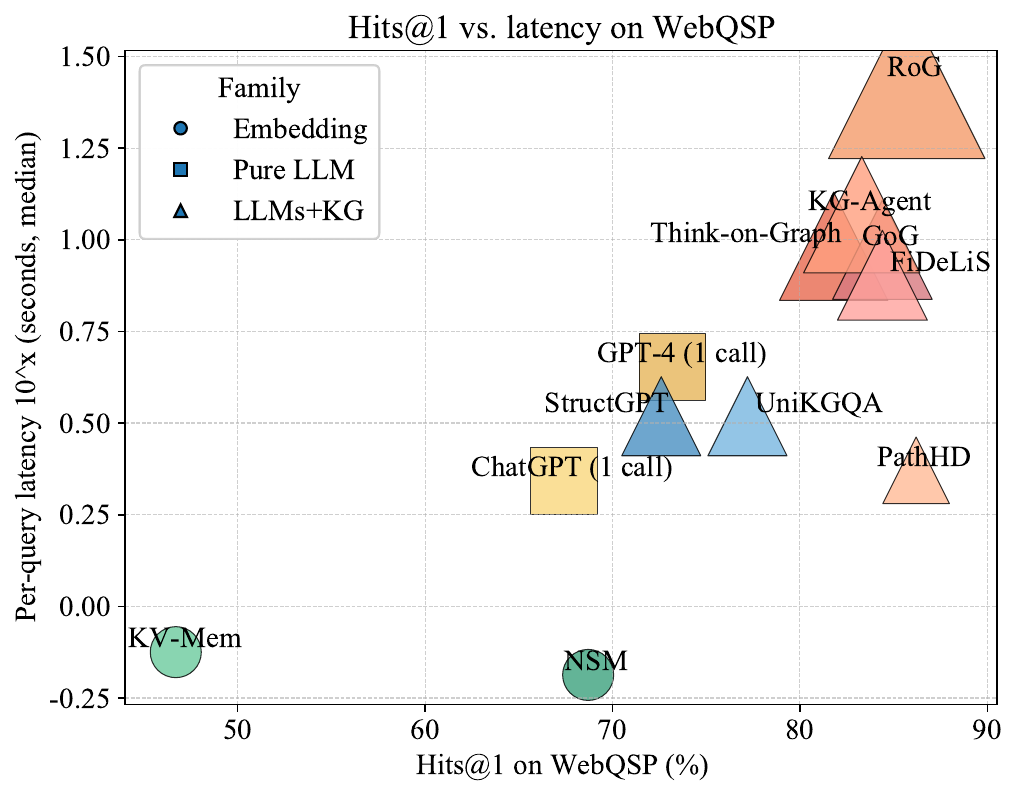}
\includegraphics[width=0.32\textwidth]{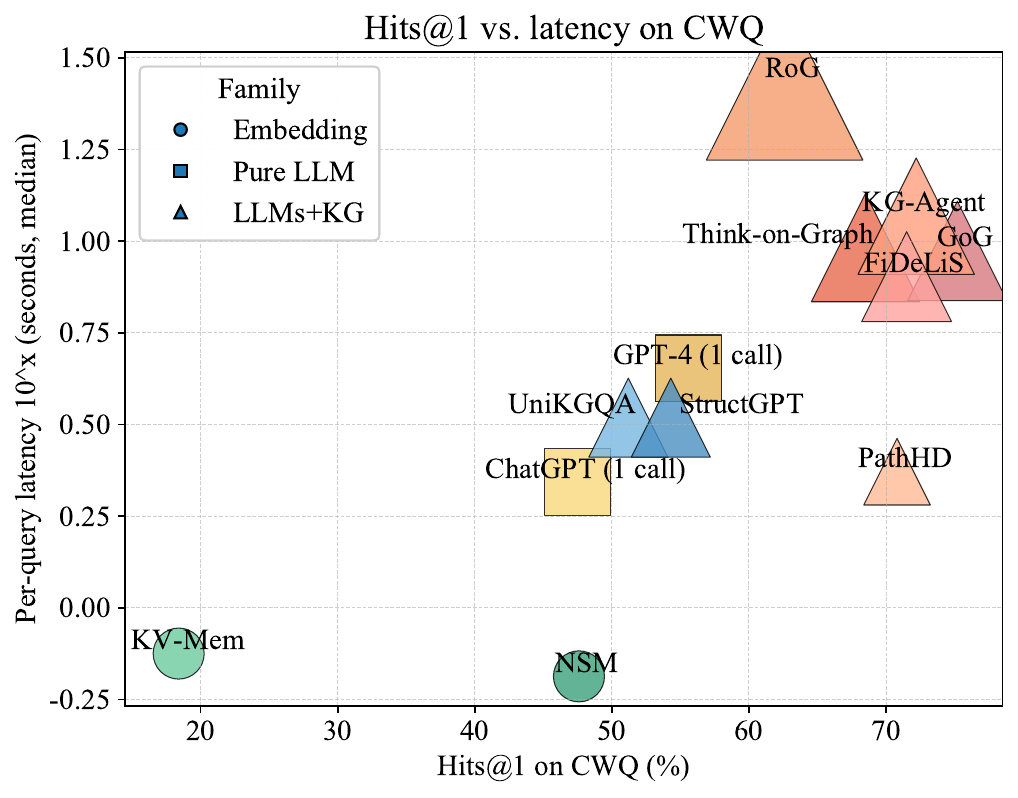}
        \includegraphics[width=0.32\textwidth]{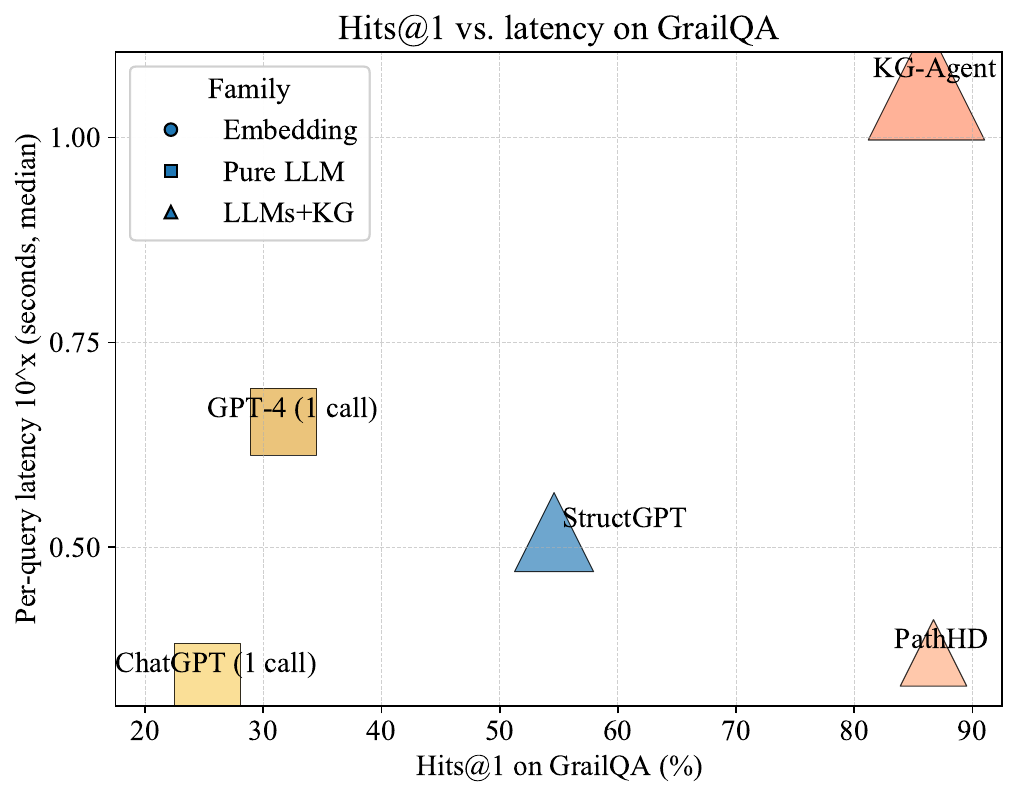}
        \vspace{-0.5em}
\caption{\textbf{Visualization of performance and latency.} 
The x-axis is Hits@$1$ (\%), the y-axis is per-query latency in seconds (median, log scale). 
Bubble size indicates the average number of LLM calls; marker shape denotes the method family. 
\ourmethod gives strong accuracy with lower latency than multi-call LLMs+KG baselines.}
\label{fig:bubble}
\vspace{-0.5em}
\end{figure}
We assess end-to-end cost via a \emph{Hits@1--latency} bubble plot
(\Cref{fig:bubble}) and a lollipop latency chart (\Cref{fig:lollipop}).
In \Cref{fig:bubble}, $x$ = \textit{Hits@1}, $y$ = median per-query latency
(log-scale); bubble size = average \#LLM calls; marker shape = method family.
All latencies are measured under a common hardware and LLM-backend setup to
enable fair relative comparison.
Latencies in \Cref{fig:lollipop} follow a shared protocol: in our
implementation, each LLM call takes on the order of a few seconds, whereas
non-LLM vector/graph operations are typically within $0.3$--$0.8$s.
\ourmethod uses vector-space scoring with Top-$K$ pruning and a \emph{single}
LLM decision; RoG uses beam search ($B{=}3$, depth $\le$ dataset hops).
A factor breakdown (\#calls, depth $d$, beam $b$, tools) appears in
\Cref{tab:analytic_full} (\Cref{app:ana_eff}).

Key observations are:
\textbf{Obs.\ding{182}} \textbf{Near-Pareto across datasets.}
With comparable accuracy to multi-call LLM+KG systems
(Think-on-Graph/GoG/KG-Agent), \ourmethod achieves markedly lower latency due
to its single-call design and compact post-pruning candidate set.
\textbf{Obs.\ding{183}} \textbf{Latency is dominated by \#LLM calls.}
Methods with 3–8 calls (agent loops) or $\approx d\times b$ calls (beam search)
sit higher in \Cref{fig:bubble} and show longer whiskers in
\Cref{fig:lollipop}; \ourmethod avoids intermediate planning/scoring overhead.
\textbf{Obs.\ding{184}} \textbf{Moderate pruning improves cost--accuracy.}
Shrinking the pool before adjudication lowers latency without hurting
\textit{Hits@1}, especially on CWQ, where paths are longer.
\textbf{Obs.\ding{185}} \textbf{Pure LLMs are fast but underpowered.} Single-call GPT-4/ChatGPT has similar latency to our final decision yet
notably lower accuracy, underscoring the importance of structured retrieval and
path scoring.

\subsection{Ablation Study}
We analyze the contribution of each module/operation in \ourmethod. 
Our operation study covers: (1) \textit{Path composition operator}, (2) \textit{Single-LLM adjudicator}, and (3) \textit{Top-$K$ pruning}.
\vspace{-0.5em}

\begin{wraptable}{r}{0.5\textwidth} 
\vspace{-1em}
\centering
\small
\renewcommand{\arraystretch}{1.0}
\resizebox{0.5\textwidth}{!}{
\begin{tabular}{l||cc}
\Xhline{1.2pt}
\rowcolor{tablehead!20}\textbf{Operator} & \textbf{WebQSP} & \textbf{CWQ} \\
\hline\hline
\rowcolor{gray!10}XOR / bipolar product & 83.9 & 68.8 \\
 Element-wise product (Real-valued)        & 84.4 & 69.2 \\
\rowcolor{gray!10}Comm.\ bind      & 84.7 & 69.6 \\
FHRR                     & 84.9 & 70.0 \\
\rowcolor{gray!10}HRR        & 85.1 & 70.2 \\
\textbf{GHRR}            & \textbf{86.2} & \textbf{71.5} \\
\Xhline{1.2pt}
\end{tabular}}
\vspace{-0.5em}
\caption{\textbf{Effect of the path–composition operator.} 
GHRR yields the best performance.}
\label{tab:ablate-binding}
\vspace{-0.5em}
\end{wraptable}
\paragraph{Which path–composition operator works best?}
We isolate relation binding by fixing retrieval, scoring, pruning, and the single LLM step, and \emph{only} swapping the encoder’s path–composition operator. We compare six options (defs. in \Cref{app:binding}): (i) \textit{XOR/bipolar} and (ii) real-valued element-wise products, both fully \emph{commutative}; (iii) a stronger \emph{commutative} mix of binary/bipolar; (iv) \textit{FHRR} (phasors) and (v) \textit{HRR} (circular convolution), efficient yet effectively commutative; and (vi) our \textit{block-diagonal GHRR} with unitary blocks, \emph{non-commutative} and order-preserving. Paths of length $1$–$4$ use identical dimension/normalization. As in \Cref{tab:ablate-binding}, commutative binds lag, HRR/FHRR give modest gains, and \textbf{GHRR} yields the best \textit{Hits@1} on WebQSP and CWQ by reliably separating \textit{founded\_by}$\rightarrow$\textit{CEO\_of} from its reverse.
\vspace{-0.5em}

\begin{wraptable}{r}{0.44\textwidth} 
\vspace{-1.2em}
\centering
\small
\renewcommand{\arraystretch}{1.0}
\begin{tabular}{l||cc}
\Xhline{1.2pt}
\rowcolor{tablehead!20}\textbf{Final step} & \textbf{WebQSP} & \textbf{CWQ} \\
\hline\hline
\rowcolor{gray!10}Vector-only            & 85.4 & 70.8 \\
\textbf{Vector $\rightarrow$ 1$\times$LLM} & \textbf{86.2} & \textbf{71.5} \\
\Xhline{1.2pt}
\end{tabular}
\vspace{-0.5em}
\caption{\textbf{Ablation on the final decision maker}. Passing pruned candidates and scores
to a single LLM for adjudication yields consistent gains over vector-only selection.}
\label{tab:ablate-final-llm}
\vspace{-1em}
\end{wraptable}
\vspace{-0.5em}
\paragraph{Do we need a final single LLM adjudicator?}
We test whether a lightweight LLM judgment helps beyond pure vector scoring. \textit{Vector-only} selects the top path by cosine similarity; \textit{Vector $\rightarrow$ 1$\times$LLM} instead forwards the pruned top-$K$ paths (with scores and end entities) to a single LLM using a short fixed template (no tools/planning) to choose the answer \emph{without} long chains of thought. As shown in \Cref{tab:ablate-final-llm}, \textbf{Vector $\rightarrow$ 1$\times$LLM} consistently outperforms \textit{Vector-only} on both datasets, especially when the top two paths are near-tied or a high-scoring path has a subtle type mismatch; a single adjudication pass resolves such cases at negligible extra cost.

\begin{wraptable}{r}{0.45\textwidth} 
\vspace{-1em}
\centering
\small
\setlength{\tabcolsep}{2pt}
\begin{tabular}{l||cccc}
\Xhline{1.2pt}
\rowcolor{tablehead!20}\textbf{Pruning} & \makecell{\textbf{Hits@$1$}\\\textbf{(WebQSP)}} & \textbf{Lat.} & \makecell{\textbf{Hits@$1$}\\\textbf{(CWQ)}} & \textbf{Lat.} \\
\hline\hline
\rowcolor{gray!10}No-prune & 85.8 & 2.42s & 70.7 & 2.45s \\
$K{=}2$                     & 86.0 & \underline{1.98s} & 71.2 & \underline{2.00s} \\
\rowcolor{gray!10}\textbf{$K{=}3$} 
                           & \textbf{86.2} & \textbf{1.92s} & \textbf{71.5} & \textbf{1.94s} \\
$K{=}5$                     & \underline{86.1} & 2.05s & \underline{71.4} & 2.06s \\
\Xhline{1.2pt}
\end{tabular}
\vspace{-0.5em}
\caption{\textbf{Impact of top-$K$ pruning before the final LLM.} Small sets (K=2–3) retain or slightly improve accuracy while reducing latency. We adopt \textbf{$K{=}3$} by default.}
\label{tab:ablate-topk}
\vspace{-0.5em}
\end{wraptable}
\paragraph{What is the effect of top-$K$ pruning before the final step?}
Finally, we study how many candidates should be kept for the last decision. We vary the number of paths passed to the final LLM among $K\!\in\!\{2,3,5\}$ and also include a \textit{No-prune} variant that sends all retrieved paths. Retrieval and scoring are fixed; latency is the median per query (lower is better). As shown in \Cref{tab:ablate-topk}, \textbf{$K{=}3$} achieves the best Hits@$1$ on both WebQSP and CWQ with the lowest latency, while $K{=}2$ is a close second and yields the largest latency drop. In contrast, \textit{No-prune} maintains maximal recall but increases latency and often introduces near-duplicate/noisy paths that can blur the final decision. We therefore adopt \textbf{$K{=}3$} as the default.

\begin{figure*}[!t]
\vspace{-0.6em}
\centering
\includegraphics[width=\linewidth]{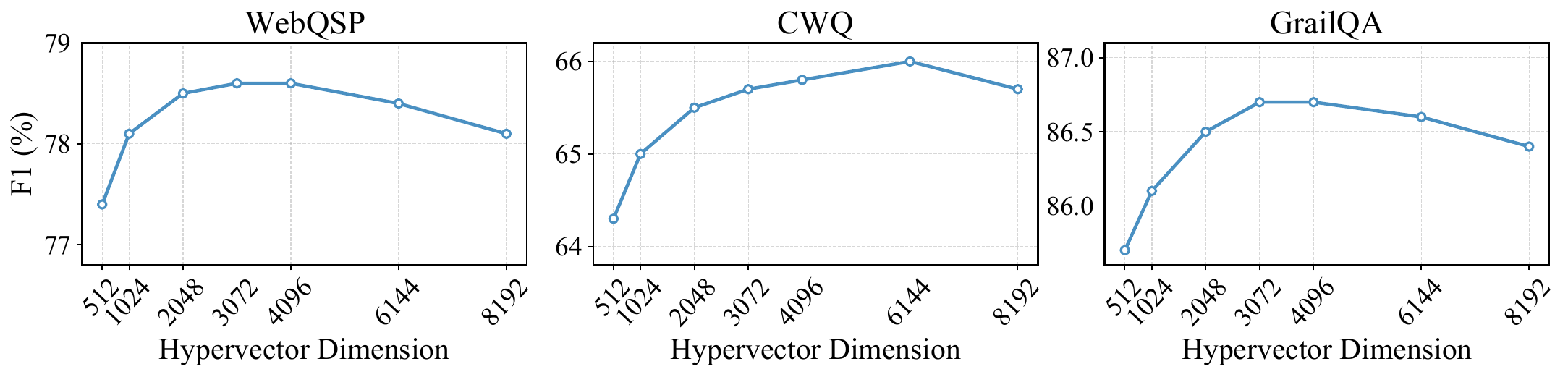}
\vspace{-1.9em}
\caption{\textbf{Hypervector dimension study.} Each panel reports F1 (\%) of \ourmethod on WebQSP, CWQ, and GrailQA as a function of the hypervector dimension. Overall, performance rises from $512$ to the mid-range and then tapers off: WebQSP and GrailQA peak around 3k–4k, while CWQ prefers a slightly larger size (6k), after which F1 decreases mildly.}
\label{fig:attack}
\vspace{-1em}
\end{figure*}

\subsection{Case Study}\label{sec:case}
To better understand how our model performs step-by-step reasoning, we present two representative cases from the WebQSP dataset in~\Cref{table:webqsp-case}. These cases highlight the effects of candidate path pruning and the contribution of LLM-based adjudication in improving answer accuracy.
\noindent\textbf{Case 1}:
Top-$K$ pruning preserves paths aligned with both \texttt{film.film.music} and actor cues; the vector-only scorer already picks the correct path, and a single LLM adjudication confirms \textit{Valentine's Day}, illustrating that pruning reduces cost while retaining high-coverage candidates.
\noindent\textbf{Case 2}:
A vector-only top path (\texttt{film.film.edited\_by}) misses the actor constraint and yields a false positive, but adjudication over the pruned set, now including \texttt{performance.actor}, corrects to \textit{The Perks of Being a Wallflower}, showing that LLM adjudication resolves compositional constraints beyond static similarity.

\begin{table*}[t]
\centering
\small
\setlength{\tabcolsep}{4pt}
\renewcommand{\arraystretch}{1.0}
\newcommand{\code}[1]{\texttt{\footnotesize #1}}
\newcommand{\good}{\textcolor{ForestGreen}{\ding{51}}}  %
\newcommand{\bad}{\textcolor{BrickRed}{\ding{55}}}      %
\vspace{-1em}
\begin{tabularx}{\textwidth}{lX}
\Xhline{1.2pt}
\multicolumn{2}{l}{\textbf{Case 1:} \emph{which movies featured Taylor Swift and music by John Debney}}\\
\hline
\multirow{4}{*}{\textbf{Top-4 candidates}} & 1) \code{film.film.music} (0.2567) \\
& 2) \code{person.nationality $\rightarrow$ film.film.country} (0.2524) \\
& 3) \code{performance.actor $\rightarrow$ performance.film} (0.2479) \\
& 4) \code{people.person.languages $\rightarrow$ film.film.language} (0.2430) \\
\hline
\multirow{3}{*}{\makecell{\textbf{Top-$K$ after pruning}\\ (K{=}3)}}& \code{film.film.music}  \\
&\code{person.nationality $\rightarrow$ film.film.country} \\
& \code{performance.actor $\rightarrow$ performance.film} \\
\hline
\multirow{2}{*}{\textbf{Vector-only (no LLM)}} &
Pick \code{film.film.music} \good\ — directly targets the composer-to-film mapping; relevant for filtering by music. \\
\hline
\multirow{3}{*}{\textbf{1$\times$LLM adjudication}} &
\emph{Rationale:} ``To find films with both Taylor Swift and music by John Debney, use actor-to-film and music-to-film relations. The chosen path targets the latter directly.'' \\
\hline
\textbf{Final Answer / GT} & \textbf{Valentine’s Day (predict)} / \textbf{Valentine’s Day} \good \\
\hline
\hline
\multicolumn{2}{l}{\textbf{Case 2 :} \emph{in which movies does Logan Lerman act in that was edited by Mary Jo Markey}}\\
\hline
\multirow{5}{*}{\textbf{Top-4 candidates}} &1) \code{film.film.edited\_by} (0.2548)\\
& 2) \code{person.nationality $\rightarrow$ film.film.country} (0.2527)\\
& 3) \code{performance.actor $\rightarrow$ performance.film} (0.2505) \\
& 4) \code{award.award\_winner.awards\_won} $\rightarrow$  \\
& \code{award.award\_honor.honored\_for} (0.2420) \\
\hline
\multirow{3}{*}{\makecell{\textbf{Top-$K$ after pruning}\\ (K{=}3)}}& \code{film.film.edited\_by}  \\
&\code{person.nationality $\rightarrow$ film.film.country} \\
& \code{performance.actor $\rightarrow$ performance.film} \\
\hline
\multirow{2}{*}{\textbf{Vector-only (no LLM)}} &
Pick \code{film.film.edited\_by} \bad\ — identifies edited films, but lacks actor constraint; leads to false positives. \\
\hline
\hline
\multirow{3}{*}{\textbf{1$\times$LLM adjudication}} &
\emph{Rationale:} ``The question requires jointly filtering for actor and editor. While \code{film.edited\_by} is relevant, combining it with \code{performance.actor} improves precision by ensuring Logan Lerman is in the cast.'' \\
\hline
\textbf{Final Answer / GT} & \textbf{Perks of Being a Wallflower (predict)} / \textbf{Perks of Being a Wallflower} \good \\
\Xhline{1.2pt}
\end{tabularx}
\caption{\textbf{Case studies on multi-hop reasoning over WebQSP.} Top-$K$ pruning is applied before invoking LLM, reducing cost while retaining plausible candidates.}
\label{table:webqsp-case}
\end{table*}

\noindent\textbf{Discussion.}
As {\ourmethod} operates in a single-call, fixed-candidate regime, its performance
ultimately depends on (i) the Top-$K$ retrieved paths covering at least one
valid reasoning chain and (ii) the adjudication LLM correctly ranking these
candidates. In practice, we mitigate this by using a relatively generous $K$
(e.g., $K=3$) and beam widths that yield high coverage of gold paths
(see \Cref{app:case}), but extreme cases can still be challenging. Note that all LLM reasoning systems that first retrieve a Top-$K$ set of candidates will face the same challenge.

\vspace{-0.3em}
\section{Related Work}\label{sec:related_work}
\vspace{-0.3em}
\subsection{LLM-based Reasoning} 
Large language models (LLMs) are now widely adopted across research and industry—powering generation, retrieval, and decision-support systems at scale \citep{chang2024survey,minaee2024large,liu2025lune,liu2025recover}. LLM-based Reasoning, such as GPT~\citep{radford2019language,brown2020language}, LLaMA~\citep{touvron2023llama}, and PaLM~\citep{chowdhery2023palm}, have demonstrated impressive capabilities in diverse reasoning tasks, ranging from natural language inference to multi-hop question answering~\citep{yang2018hotpotqa}. A growing body of work focuses on enhancing the interpretability and reliability of LLM reasoning through \emph{symbolic path-based reasoning} over structured knowledge sources~\citep{sun2018open,cao2022kqa,hu2025cotel}. For example, Wei et al.~\citep{wei2022chain} proposed chain-of-thought prompting, which improves reasoning accuracy by encouraging explicit intermediate steps. Wang et al.~\citep{wang2022self} introduced self-consistency decoding, which aggregates multiple reasoning chains to improve robustness. 

Knowledge graphs are widely deployed in real-world systems (e.g., web search, recommendation, biomedicine) and constitute an active research area for representation learning and reasoning \citep{ji2021survey,zou2020survey,liu2023error,zhang2022contrastive}. In the context of knowledge graphs, recent efforts have explored hybrid neural-symbolic approaches to combine the structural expressiveness of graph reasoning with the generative power of LLMs. Fan et al.~\citep{luo2023reasoning} proposed Reasoning on Graphs (RoG), which first prompts LLMs to generate plausible symbolic relation paths and then retrieves and verifies these paths over knowledge graphs. Similarly, Khattab et al.~\citep{khattab2022demonstrate} leveraged demonstration-based prompting to guide LLM reasoning grounded in external knowledge. Despite their interpretability benefits, these methods rely heavily on neural encoders for path matching, incurring substantial computational and memory overhead, which limits scalability to large KGs or real-time applications.
\vspace{-0.3em}
\subsection{Hyperdimensional Computing (HDC)} 
\vspace{-0.3em}
HDC is an emerging computational paradigm inspired by the properties of high-dimensional representations in cognitive neuroscience~\citep{kanerva2009hyperdimensional,kanerva1997fully}. In HDC, information is represented as fixed-length high-dimensional vectors (hypervectors), and symbolic structures are manipulated through simple algebraic operations such as binding, bundling, and permutation~\citep{gayler2004vector}. These operations are inherently parallelizable and robust to noise, making HDC appealing for energy-efficient and low-latency computation.

HDC has been successfully applied in domains such as classification~\citep{rahimi2017hyperdimensional}, biosignal processing~\citep{moin2021wearable}, natural language understanding~\citep{maddali2023fusion}, and graph analytics~\citep{imani2019framework}. For instance, Imani et al.~\citep{imani2019framework} demonstrated that HDC can encode and process graph-structured data efficiently, enabling scalable similarity search and inference. Recent studies have also explored \emph{neuro-symbolic} integrations, where HDC complements neural networks to achieve interpretable yet computationally efficient models~\citep{imani2019adapthd,rahimi2017hyperdimensional}. However, the potential of HDC in large-scale reasoning over knowledge graphs, particularly when combined with LLMs, remains underexplored. Our work bridges this gap by leveraging HDC as a drop-in replacement for neural path matchers in LLM-based reasoning frameworks, thereby achieving both scalability and interpretability.

Existing KG-LLM reasoning frameworks typically rely on learned neural encoders or multi-call agent pipelines to score candidate paths or subgraphs, often with Transformers, GNNs, or repeated LLM calls. In contrast, our work keeps the retrieval module entirely encoder-free and training-free: {\ourmethod} replaces neural path scorers with HDC-based hypervector encodings and similarity, while remaining compatible with standard KG-LLM agents. Our goal is thus not to introduce new VSA theory, but to show that such carefully designed HDC representations can replace learned neural scorers in KG-LLM systems while preserving accuracy and substantially improving latency, memory footprint, and interpretability.

\section{Conclusion}
In this work, we presented \textbf{\ourmethod}, an encoder-free and
interpretable retrieval mechanism for path-based reasoning over knowledge graphs
with LLMs.
{\ourmethod} replaces neural path scorers with hyperdimensional Computing (HDC):
relation paths are encoded into order-aware GHRR hypervectors, ranked via
simple vector operations, and passed to a \emph{single} LLM adjudication step
that outputs answers together with cited supporting paths.
This Plan $\rightarrow$ Encode $\rightarrow$ Retrieve $\rightarrow$ Reason
design removes the need for costly neural encoders in the retrieval module and
shifts most computation into cheap, parallel hypervector operations. Experimental results on three standard Freebase-based KGQA benchmarks
(WebQSP, CWQ, GrailQA) show that \ourmethod attains competitive Hits@1 and F1
while substantially reducing end-to-end latency and GPU memory usage, and
produces faithful, path-grounded rationales that aid error analysis and
controllability.
Taken together, these findings indicate that carefully designed HDC representations are a practical substrate for efficient KG-LLM reasoning, offering a favorable accuracy-efficiency-interpretability trade-off. An important direction for future work is to apply {\method} to domain-specific graphs, such as UMLS and other biomedical or enterprise KGs, as well as to tasks beyond QA (e.g., fact checking or rule induction), and to explore how the HDC representations and retrieval pipeline can be adapted in these settings while retaining the same efficiency benefits.

\section*{Acknowledgements}
This work was supported in part by the DARPA Young Faculty Award, the National Science Foundation (NSF) under Grants \#2127780, \#2319198, \#2321840, \#2312517, and \#2235472, \#2431561, the Semiconductor Research Corporation (SRC), the Office of Naval Research through the Young Investigator Program Award, and Grants \#N00014-21-1-2225 and \#N00014-22-1-2067, Army Research Office Grant \#W911NF2410360. Additionally, support was provided by the Air Force Office of Scientific Research under Award \#FA9550-22-1-0253, along with generous gifts from Xilinx and Cisco.

\bibliography{ref}

\begin{thebibliography}{10}

\bibitem{achiam2023gpt}
Josh Achiam, Steven Adler, Sandhini Agarwal, Lama Ahmad, Ilge Akkaya, Florencia~Leoni Aleman, Diogo Almeida, Janko Altenschmidt, Sam Altman, Shyamal Anadkat, et~al.
\newblock Gpt-4 technical report.
\newblock {\em arXiv preprint arXiv:2303.08774}, 2023.

\bibitem{bollacker2008freebase}
Kurt Bollacker, Colin Evans, Praveen Paritosh, Tim Sturge, and Jamie Taylor.
\newblock Freebase: a collaboratively created graph database for structuring human knowledge.
\newblock In {\em Proceedings of the 2008 ACM SIGMOD international conference on Management of data}, pages 1247--1250, 2008.

\bibitem{brown2020language}
Tom~B Brown, Benjamin Mann, Nick Ryder, Melanie Subbiah, Jared Kaplan, Prafulla Dhariwal, Arvind Neelakantan, Pranav Shyam, Girish Sastry, Amanda Askell, et~al.
\newblock Language models are few-shot learners.
\newblock In {\em Advances in Neural Information Processing Systems (NeurIPS)}, volume~33, pages 1877--1901, 2020.

\bibitem{cao2022kqa}
Shulin Cao, Jiaxin Shi, Liangming Pan, Lunyiu Nie, Yutong Xiang, Lei Hou, Juanzi Li, Bin He, and Hanwang Zhang.
\newblock Kqa pro: A dataset with explicit compositional programs for complex question answering over knowledge base.
\newblock In {\em Proceedings of the 60th annual meeting of the Association for Computational Linguistics (volume 1: long papers)}, pages 6101--6119, 2022.

\bibitem{chang2024survey}
Yupeng Chang, Xu~Wang, Jindong Wang, Yuan Wu, Linyi Yang, Kaijie Zhu, Hao Chen, Xiaoyuan Yi, Cunxiang Wang, Yidong Wang, et~al.
\newblock A survey on evaluation of large language models.
\newblock {\em ACM transactions on intelligent systems and technology}, 15(3):1--45, 2024.

\bibitem{chowdhery2023palm}
Aakanksha Chowdhery, Sharan Narang, Jacob Devlin, Maarten Bosma, Gaurav Mishra, Adam Roberts, Paul Barham, Hyung~Won Chung, Charles Sutton, Sebastian Gehrmann, et~al.
\newblock Palm: Scaling language modeling with pathways.
\newblock {\em Journal of Machine Learning Research}, 24(240):1--113, 2023.

\bibitem{frady2021variable}
E.~Paxon Frady, Denis Kleyko, and Friedrich~T. Sommer.
\newblock Variable binding for sparse distributed representations: Theory and applications.
\newblock {\em Neural Computation}, 33(9):2207--2248, 2021.

\bibitem{gayler2004vector}
Ross~W Gayler.
\newblock Vector symbolic architectures answer jackendoff's challenges for cognitive neuroscience.
\newblock {\em arXiv preprint cs/0412059}, 2004.

\bibitem{gu2021beyond}
Yu~Gu, Sue Kase, Michelle Vanni, Brian Sadler, Percy Liang, Xifeng Yan, and Yu~Su.
\newblock Beyond iid: three levels of generalization for question answering on knowledge bases.
\newblock In {\em Proceedings of the web conference 2021}, pages 3477--3488, 2021.

\bibitem{he2021improving}
Gaole He, Yunshi Lan, Jing Jiang, Wayne~Xin Zhao, and Ji-Rong Wen.
\newblock Improving multi-hop knowledge base question answering by learning intermediate supervision signals.
\newblock In {\em Proceedings of the 14th ACM international conference on web search and data mining}, pages 553--561, 2021.

\bibitem{hu2025cotel}
Haotian Hu, Alex~Jie Yang, Sanhong Deng, Dongbo Wang, and Min Song.
\newblock Cotel-d3x: a chain-of-thought enhanced large language model for drug--drug interaction triplet extraction.
\newblock {\em Expert Systems with Applications}, 273:126953, 2025.

\bibitem{imani2019framework}
Mohsen Imani, Yeseong Kim, Sadegh Riazi, John Messerly, Patric Liu, Farinaz Koushanfar, and Tajana Rosing.
\newblock A framework for collaborative learning in secure high-dimensional space.
\newblock In {\em 2019 IEEE 12th International Conference on Cloud Computing (CLOUD)}, pages 435--446. IEEE, 2019.

\bibitem{imani2019adapthd}
Mohsen Imani, Justin Morris, Samuel Bosch, Helen Shu, Giovanni De~Micheli, and Tajana Rosing.
\newblock Adapthd: Adaptive efficient training for brain-inspired hyperdimensional computing.
\newblock In {\em 2019 IEEE Biomedical Circuits and Systems Conference (BioCAS)}, pages 1--4. IEEE, 2019.

\bibitem{ji2021survey}
Shaoxiong Ji, Shirui Pan, Erik Cambria, Pekka Marttinen, and Philip~S Yu.
\newblock A survey on knowledge graphs: Representation, acquisition, and applications.
\newblock {\em IEEE transactions on neural networks and learning systems}, 33(2):494--514, 2021.

\bibitem{jiang2023structgpt}
Jinhao Jiang, Kun Zhou, Zican Dong, Keming Ye, Wayne~Xin Zhao, and Ji-Rong Wen.
\newblock Structgpt: A general framework for large language model to reason over structured data.
\newblock {\em arXiv preprint arXiv:2305.09645}, 2023.

\bibitem{jiang2024kg}
Jinhao Jiang, Kun Zhou, Wayne~Xin Zhao, Yang Song, Chen Zhu, Hengshu Zhu, and Ji-Rong Wen.
\newblock Kg-agent: An efficient autonomous agent framework for complex reasoning over knowledge graph.
\newblock {\em arXiv preprint arXiv:2402.11163}, 2024.

\bibitem{jiang2022unikgqa}
Jinhao Jiang, Kun Zhou, Wayne~Xin Zhao, and Ji-Rong Wen.
\newblock Unikgqa: Unified retrieval and reasoning for solving multi-hop question answering over knowledge graph.
\newblock {\em arXiv preprint arXiv:2212.00959}, 2022.

\bibitem{kanerva2009hyperdimensional}
Pentti Kanerva.
\newblock Hyperdimensional computing: An introduction to computing in distributed representation with high-dimensional random vectors.
\newblock {\em Cognitive Computation}, 1(2):139--159, 2009.

\bibitem{kanerva1997fully}
Pentti Kanerva et~al.
\newblock Fully distributed representation.
\newblock {\em PAT}, 1(5):10000, 1997.

\bibitem{khattab2022demonstrate}
Omar Khattab, Keshav Santhanam, Xiang~Lisa Li, David Hall, Percy Liang, Christopher Potts, and Matei Zaharia.
\newblock Demonstrate-search-predict: Composing retrieval and language models for knowledge-intensive nlp.
\newblock {\em arXiv preprint arXiv:2212.14024}, 2022.

\bibitem{lewis2020rag}
Patrick Lewis, Ethan Perez, Aleksandra Piktus, Fabio Petroni, Vladimir Karpukhin, Naman Goyal, Heinrich Küttler, Mike Lewis, Wen-tau Yih, Tim Rockt{\"a}schel, et~al.
\newblock Retrieval-augmented generation for knowledge-intensive nlp.
\newblock In {\em NeurIPS}, 2020.

\bibitem{liu2025lune}
Yezi Liu, Hanning Chen, Wenjun Huang, Yang Ni, and Mohsen Imani.
\newblock Lune: Efficient llm unlearning via lora fine-tuning with negative examples.
\newblock In {\em Socially Responsible and Trustworthy Foundation Models at NeurIPS 2025}.

\bibitem{liu2025recover}
Yezi Liu, Hanning Chen, Wenjun Huang, Yang Ni, and Mohsen Imani.
\newblock Recover-to-forget: Gradient reconstruction from lora for efficient llm unlearning.
\newblock In {\em Socially Responsible and Trustworthy Foundation Models at NeurIPS 2025}.

\bibitem{liu2023error}
Yezi Liu, Qinggang Zhang, Mengnan Du, Xiao Huang, and Xia Hu.
\newblock Error detection on knowledge graphs with triple embedding.
\newblock In {\em 2023 31st European Signal Processing Conference (EUSIPCO)}, pages 1604--1608. IEEE, 2023.

\bibitem{luo2023reasoning}
Linhao Luo, Yuan-Fang Li, Gholamreza Haffari, and Shirui Pan.
\newblock Reasoning on graphs: Faithful and interpretable large language model reasoning.
\newblock {\em arXiv preprint arXiv:2310.01061}, 2023.

\bibitem{maddali2023fusion}
Raghavender Maddali.
\newblock Fusion of quantum-inspired ai and hyperdimensional computing for data engineering.
\newblock {\em Zenodo, doi}, 10, 2023.

\bibitem{miller2016key}
Alexander~H. Miller, Adam Fisch, Jesse Dodge, Amir{-}Hossein Karimi, Antoine Bordes, and Jason Weston.
\newblock Key-value memory networks for directly reading documents.
\newblock {\em CoRR}, abs/1606.03126, 2016.

\bibitem{minaee2024large}
Shervin Minaee, Tomas Mikolov, Narjes Nikzad, Meysam Chenaghlu, Richard Socher, Xavier Amatriain, and Jianfeng Gao.
\newblock Large language models: A survey.
\newblock {\em arXiv preprint arXiv:2402.06196}, 2024.

\bibitem{moin2021wearable}
Ali Moin, Alex Zhou, Abbas Rahimi, Ankita Menon, Simone Benatti, George Alexandrov, Samuel Tamakloe, Joash Ting, Naoya Yamamoto, Yasser Khan, et~al.
\newblock A wearable biosensing system with in-sensor adaptive machine learning for hand gesture recognition.
\newblock {\em Nature Electronics}, 4(1):54--63, 2021.

\bibitem{ouyang2022training}
Long Ouyang, Jeffrey Wu, Xu~Jiang, Diogo Almeida, Carroll Wainwright, Pamela Mishkin, Chong Zhang, Sandhini Agarwal, Katarina Slama, Alex Ray, et~al.
\newblock Training language models to follow instructions with human feedback.
\newblock {\em Advances in neural information processing systems}, 35:27730--27744, 2022.

\bibitem{plate1995holographic}
Tony~A Plate.
\newblock Holographic reduced representations.
\newblock {\em IEEE Transactions on Neural Networks}, 6(3):623--641, 1995.

\bibitem{press2022selfask}
Ofir Press, Muru Zhang, Sewon Min, Ludwig Schmidt, Noah~A. Smith, and Omer Levy.
\newblock Measuring and narrowing the compositionality gap in language models.
\newblock {\em arXiv:2210.03350}, 2022.
\newblock Self-Ask.

\bibitem{radford2019language}
Alec Radford, Jeffrey Wu, Rewon Child, David Luan, Dario Amodei, and Ilya Sutskever.
\newblock Language models are unsupervised multitask learners.
\newblock {\em OpenAI Blog}, 1(8):9, 2019.

\bibitem{rahimi2017hyperdimensional}
Abbas Rahimi, Pentti Kanerva, Jos{\'e} del~R Mill{\'a}n, and Jan~M Rabaey.
\newblock Hyperdimensional computing for noninvasive brain--computer interfaces: Blind and one-shot classification of eeg error-related potentials.
\newblock In {\em 10th EAI International Conference on Bio-Inspired Information and Communications Technologies}, page~19. European Alliance for Innovation (EAI), 2017.

\bibitem{saxena2020improving}
Apoorv Saxena, Aditay Tripathi, and Partha Talukdar.
\newblock Improving multi-hop question answering over knowledge graphs using knowledge base embeddings.
\newblock In {\em Proceedings of the 58th annual meeting of the association for computational linguistics}, pages 4498--4507, 2020.

\bibitem{shi2021transfernet}
Jiaxin Shi, Shulin Cao, Lei Hou, Juanzi Li, and Hanwang Zhang.
\newblock Transfernet: An effective and transparent framework for multi-hop question answering over relation graph.
\newblock {\em arXiv preprint arXiv:2104.07302}, 2021.

\bibitem{sui2024fidelis}
Yuan Sui, Yufei He, Nian Liu, Xiaoxin He, Kun Wang, and Bryan Hooi.
\newblock Fidelis: Faithful reasoning in large language model for knowledge graph question answering.
\newblock {\em arXiv preprint arXiv:2405.13873}, 2024.

\bibitem{sun2018open}
Haitian Sun, Tania Bedrax-Weiss, and William~W Cohen.
\newblock Open domain question answering using early fusion of knowledge bases and text.
\newblock In {\em Proceedings of the 2018 Conference on Empirical Methods in Natural Language Processing (EMNLP)}, pages 4231--4242, 2018.

\bibitem{sun2023think}
Jiashuo Sun, Chengjin Xu, Lumingyuan Tang, Saizhuo Wang, Chen Lin, Yeyun Gong, Lionel~M Ni, Heung-Yeung Shum, and Jian Guo.
\newblock Think-on-graph: Deep and responsible reasoning of large language model on knowledge graph.
\newblock {\em arXiv preprint arXiv:2307.07697}, 2023.

\bibitem{talmor2018web}
Alon Talmor and Jonathan Berant.
\newblock The web as a knowledge-base for answering complex questions.
\newblock {\em arXiv preprint arXiv:1803.06643}, 2018.

\bibitem{touvron2023llama}
Hugo Touvron, Thibaut Lavril, Gautier Izacard, Xavier Martinet, Marie-Anne Lachaux, Timothée Lacroix, Baptiste Rozière, Naman Goyal, Eric Hambro, Faisal Azhar, et~al.
\newblock Llama: Open and efficient foundation language models.
\newblock {\em arXiv preprint arXiv:2302.13971}, 2023.

\bibitem{wang2022self}
Xuezhi Wang, Jason Wei, Dale Schuurmans, Quoc Le, Ed~Chi, Sharan Narang, Aakanksha Chowdhery, and Denny Zhou.
\newblock Self-consistency improves chain of thought reasoning in language models.
\newblock In {\em Advances in Neural Information Processing Systems (NeurIPS)}, 2022.

\bibitem{wei2022chain}
Jason Wei, Xuezhi Wang, Dale Schuurmans, Maarten Bosma, Brian Ichter, Fei Xia, Ed~H Chi, Quoc~V Le, and Denny Zhou.
\newblock Chain-of-thought prompting elicits reasoning in large language models.
\newblock In {\em Advances in Neural Information Processing Systems (NeurIPS)}, 2022.

\bibitem{xu2024generate}
Yao Xu, Shizhu He, Jiabei Chen, Zihao Wang, Yangqiu Song, Hanghang Tong, Guang Liu, Kang Liu, and Jun Zhao.
\newblock Generate-on-graph: Treat llm as both agent and kg in incomplete knowledge graph question answering.
\newblock {\em arXiv preprint arXiv:2404.14741}, 2024.

\bibitem{yang2018hotpotqa}
Zhilin Yang, Peng Qi, Saizheng Zhang, Yoshua Bengio, William~W Cohen, Ruslan Salakhutdinov, and Christopher~D Manning.
\newblock Hotpotqa: A dataset for diverse, explainable multi-hop question answering.
\newblock {\em arXiv preprint arXiv:1809.09600}, 2018.

\bibitem{yao2023react}
Shunyu Yao, Dian Yang, Run-Ze Cui, and Karthik Narasimhan.
\newblock React: Synergizing reasoning and acting in language models.
\newblock In {\em ICLR}, 2023.

\bibitem{yao2024tree}
Shunyu Yao, Dian Zhao, Luyu Yu, and Karthik Narasimhan.
\newblock Tree of thoughts: Deliberate problem solving with large language models.
\newblock {\em arXiv:2305.10601}, 2024.

\bibitem{yeung2024generalized}
Calvin Yeung, Zhuowen Zou, and Mohsen Imani.
\newblock Generalized holographic reduced representations.
\newblock {\em arXiv preprint arXiv:2405.09689}, 2024.

\bibitem{yih2016value}
Wen-tau Yih, Matthew Richardson, Christopher Meek, Ming-Wei Chang, and Jina Suh.
\newblock The value of semantic parse labeling for knowledge base question answering.
\newblock In {\em Proceedings of the 54th Annual Meeting of the Association for Computational Linguistics (Volume 2: Short Papers)}, pages 201--206, 2016.

\bibitem{zhang2022subgraph}
Jing Zhang, Xiaokang Zhang, Jifan Yu, Jian Tang, Jie Tang, Cuiping Li, and Hong Chen.
\newblock Subgraph retrieval enhanced model for multi-hop knowledge base question answering.
\newblock {\em arXiv preprint arXiv:2202.13296}, 2022.

\bibitem{zhang2022contrastive}
Qinggang Zhang, Junnan Dong, Keyu Duan, Xiao Huang, Yezi Liu, and Linchuan Xu.
\newblock Contrastive knowledge graph error detection.
\newblock In {\em Proceedings of the 31st ACM International Conference on Information \& Knowledge Management}, pages 2590--2599, 2022.

\bibitem{zou2020survey}
Xiaohan Zou.
\newblock A survey on application of knowledge graph.
\newblock In {\em Journal of Physics: Conference Series}, volume 1487, page 012016. IOP Publishing, 2020.

\end{thebibliography}
\bibliographystyle{plain}

\clearpage
\appendix
\section{Notation}\label{app:not}
\begin{table*}[h]
\centering
\small
\setlength{\tabcolsep}{6pt}
\renewcommand{\arraystretch}{1.18}
\begin{tabularx}{\textwidth}{lX}
\Xhline{1.1pt}
\textbf{Notation} & \textbf{Definition} \\
\hline
\hline
$\mathcal{G}=(\mathcal{V},\mathcal{E})$ & Knowledge graph with entity set $\mathcal{V}$ and edge set $\mathcal{E}$. \\
$\mathcal{Z}$ & Set of relation schemas/path templates. \\
$q$, $a$ & Input question and (predicted) answer. \\
$e$, $r$ & An entity and a relation (schema edge), respectively. \\
$z=(r_1,\ldots,r_\ell)$ & A relation path; $|z|=\ell$ denotes path length. \\
$\mathcal{Z}_{\text{cand}}$ & Candidate path set instantiated from $\mathcal{G}$.\\ 
$N = |Z_{\text{cand}}|$ &  The number of candidate paths instantiated from the KG for a given query.\\
$L_{\max}$, $B$, $K$ & Max plan depth, BFS beam width, and number of retrieved paths kept after pruning. \\
$d$, $D$, $m$ & Hypervector dimension, \# of GHRR blocks, and block size (unitary $m{\times}m$); flattened $d=Dm^2$. \\
$\mathbf{v}_x$ & Hypervector for symbol $x$ (entity/relation/path). \\
$\mathbf{v}_q$, $\mathbf{v}_z$ & Query-plan hypervector and a candidate-path hypervector. \\
$\mathbf{H}=[A_1;\dots;A_D]$ & A GHRR hypervector with unitary blocks $A_j\in\mathrm{U}(m)$. \\
$A^\ast$ & Conjugate transpose (unitary inverse) of a block $A$. \\
$\bbind$ & GHRR \emph{blockwise binding} operator (matrix product per block). \\
$\langle A,B\rangle_F$ & Frobenius inner product $\mathrm{tr}(A^\ast B)$; $\|A\|_F$ is the Frobenius norm. \\
$\mathrm{sim}(\cdot,\cdot)$ & Blockwise cosine similarity used for HD retrieval. \\
$s(z)$ & Calibrated retrieval score; $\alpha,\beta,\lambda$ are calibration hyperparameters; $\mathrm{IDF}(z)$ is an inverse-frequency weight. \\
$\mathcal{M}$, $M$ & Distractor set and its size $M=\lvert\mathcal{M}\rvert$ (used in capacity bounds). \\
$\epsilon,\delta$ & Tolerance and failure probability in the concentration/union bounds. \\
$c$ & Absolute constant in the sub-Gaussian tail bound. \\
\Xhline{1.1pt}
\end{tabularx}
\caption{Notation used throughout the paper.}
\label{tab:notation}
\end{table*}

\clearpage
\section{Algorithm}

\begin{algorithm}[H]
\caption{\textsc{HD-Retrieve}: Hyperdimensional Top-$K$ Path Retrieval}
\label{alg:hd-retrieve}
\KwIn{question $q$; KG $\mathcal{G}$; relation schemas $\mathcal{Z}$; max depth $L_{\max}$; beam width $B$; calibration $(\alpha,\beta,\lambda)$; Top-$K$}
\KwOut{Top-$K$ reasoning paths $\mathcal{P}_K$ and their scores}

\BlankLine
\textbf{Plan (schema-level):}\\
Construct a relation-schema graph over $\mathcal{Z}$ and run constrained BFS up to depth $L_{\max}$ with beam width $B$ to obtain a small set of type-consistent relation plans $\mathcal{Z}_q \subseteq \mathcal{Z}$ for $q$. \\

\textbf{Encode Query:}\\
Pick a plan $z_q \in \mathcal{Z}_q$ and encode it by GHRR binding
\[
\mathbf{v}_q \leftarrow \mathrm{BindPath}(z_q)
= \bigbbind_{r \in z_q} \mathbf{v}_r,
\]
followed by blockwise normalization. \tcp*{plan-based query hypervector; no unbinding}

\BlankLine
\textbf{Instantiate Candidates (entity-level):}\\
Initialize $\mathcal{P}(q) \leftarrow \emptyset$. \\
\For{$z \in \mathcal{Z}_q$}{
  Instantiate concrete KG paths consistent with schema $z$ by matching its relation pattern to edges in $\mathcal{G}$ or by a constrained BFS on $\mathcal{G}$ (depth $\le L_{\max}$, beam width $B$);\\
  Add all instantiated paths to $\mathcal{P}(q)$.
}
Deduplicate paths in $\mathcal{P}(q)$ and enforce type consistency. \\

\BlankLine
\For{$p \in \mathcal{P}(q)$}{
  Let $z(p) = (r_1,\dots,r_\ell)$ be the relation sequence of path $p$. \\[0.2em]
  \textbf{Encode Candidate:}
  \[
  \mathbf{v}_p \leftarrow \mathrm{BindPath}(z(p))
  = \bigbbind_{r \in z(p)} \mathbf{v}_r
  \]
  with blockwise normalization. \\[0.2em]
  \textbf{Score (blockwise cosine):}
  \[
  s_{\mathrm{cos}}(p) \leftarrow \mathrm{sim}(\mathbf{v}_q,\mathbf{v}_p)
  \quad\text{(Eq.~\eqref{eq:block-cos})}
  \]
  \textbf{Calibrate (optional):}
  \[
  s(p) \leftarrow s_{\mathrm{cos}}(p)
    + \alpha\,\mathrm{IDF}(\text{schema}(z(p)))
    - \beta\,\lambda^{|z(p)|}
  \]
}
\Return Top-$K$ paths in $\mathcal{P}(q)$ ranked by $s(p)$ as $\mathcal{P}_K$.\\[0.3em]
\tcp{All steps above are symbolic; no additional LLM calls beyond the final reasoning step.}
\end{algorithm}

\clearpage
\section{Prompt Template for One-shot Reasoning}
\label{app:prompt}

\begin{table}[h]
\centering
\small
\setlength{\tabcolsep}{6pt}
\renewcommand{\arraystretch}{1.2}

\begin{tabularx}{\linewidth}{@{}lX@{}}
\toprule
\textbf{System} &
\textit{You are a careful reasoner. Only use the provided KG reasoning paths as evidence.
Cite the most relevant path(s) and answer concisely.} \\
\midrule

\textbf{User} &
\textbf{Question:} ``\${QUESTION}''

\medskip

\textbf{Retrieved paths (Top-$K$):}
\begin{enumerate}[leftmargin=1.2em,itemsep=0pt,topsep=2pt,parsep=0pt]
\item \${PATH\_1} %
\item \${PATH\_2}
\item \dots
\item \${PATH\_K}
\end{enumerate} \\
\midrule

\makebox[\widthof{\textbf{Assistant (required format)}}][l]{\textbf{Assistant (required format)}} &
\textbf{Answer:} \${SHORT\_ANSWER}

\medskip

\textbf{Supporting path(s):} [indexes from the list above]

\medskip

\textbf{Rationale (1--2 sentences):} why those paths imply the answer. \\
\bottomrule
\end{tabularx}

\caption{Prompt template for KG path–grounded QA.}
\label{tab:prompt-template}
\end{table}
\section{Additional Theoretical Support}
\label{app:theory-support}

\subsection{Near-Orthogonality and Capacity in a Toy Rademacher Model}

To build intuition for why high-dimensional hypervectors are effective for
retrieval, we first consider a simplified, classical VSA setting: real
Rademacher hypervectors with element-wise binding.
Our actual method uses complex GHRR hypervectors with blockwise binding
(Section~\ref{sec:hv-init}, Proposition~\ref{prop:near-orth}), but the same
sub-Gaussian concentration arguments apply.

We justify the use of high-dimensional hypervectors in \textbf{\ourmethod} by
showing that (i) random hypervectors are nearly orthogonal with high
probability, and (ii) this property is preserved under binding, yielding
exponential concentration that enables accurate retrieval at scale.

\paragraph{Setup.}
Let each entity/relation be encoded as a Rademacher hypervector
$\mathbf{x}\in\{-1,+1\}^d$ with i.i.d.\ entries.
For two independent hypervectors $\mathbf{x},\mathbf{y}$, define cosine
similarity
$\cos(\mathbf{x},\mathbf{y}) = \frac{\langle \mathbf{x},\mathbf{y}\rangle}
{\|\mathbf{x}\|\,\|\mathbf{y}\|}$.
Since $\|\mathbf{x}\|=\|\mathbf{y}\|=\sqrt{d}$, we have
$\cos(\mathbf{x},\mathbf{y})=\frac{1}{d}\sum_{k=1}^{d} x_k y_k$.

\begin{proposition}[Near-orthogonality of random hypervectors]
\label{prop:nearorth}
For any $\epsilon\in(0,1)$,
\[
\Pr\!\left(\big|\cos(\mathbf{x},\mathbf{y})\big|>\epsilon\right)
\;\le\; 2\exp\!\left(-\tfrac{1}{2}\epsilon^2 d\right).
\]
\end{proposition}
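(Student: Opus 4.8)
The statement to prove is Proposition~\ref{prop:nearorth}: a sub-Gaussian tail bound on the cosine similarity of two independent Rademacher hypervectors. Since $\|\mathbf{x}\| = \|\mathbf{y}\| = \sqrt{d}$ deterministically, the cosine similarity reduces to the normalized sum $\cos(\mathbf{x},\mathbf{y}) = \frac{1}{d}\sum_{k=1}^d x_k y_k$, so there is no ratio of two random quantities to worry about — the denominator is a constant. The proof is then a one-line application of Hoeffding's inequality once the right observation about the summands is in place.

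\textbf{Key steps.} First I would condition on $\mathbf{y}$ (or equivalently observe the symmetry directly): given any fixed sign pattern $\mathbf{y} \in \{-1,+1\}^d$, the products $Z_k := x_k y_k$ are i.i.d.\ Rademacher random variables, because $x_k$ is Rademacher and multiplying by the fixed sign $y_k = \pm 1$ preserves the symmetric $\{-1,+1\}$ distribution. Hence $S := \sum_{k=1}^d Z_k$ is a sum of $d$ i.i.d.\ mean-zero variables each bounded in $[-1,1]$. Second, I would apply Hoeffding's inequality: $\Pr(|S| > t) \le 2\exp(-2t^2 / \sum_k (b_k - a_k)^2) = 2\exp(-2t^2/(4d)) = 2\exp(-t^2/(2d))$. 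Third, I would substitute $t = \epsilon d$ to convert the bound on $S = d\cos(\mathbf{x},\mathbf{y})$ into a bound on the cosine: $\Pr(|\cos(\mathbf{x},\mathbf{y})| > \epsilon) = \Pr(|S| > \epsilon d) \le 2\exp(-(\epsilon d)^2/(2d)) = 2\exp(-\tfrac12 \epsilon^2 d)$. Since the conditional bound does not depend on $\mathbf{y}$, it holds unconditionally by averaging (tower property), which completes the proof.

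\textbf{Main obstacle.} Frankly, there is no real obstacle here — the result is a textbook concentration statement and the only "insight" needed is that $x_k y_k$ is again Rademacher when $y_k$ is a fixed sign, which makes the normalizing denominator deterministic and sidesteps any need for concentration of the norm. If one wanted to be pedantic, the only point deserving a sentence of care is the conditioning argument (why it suffices to bound the conditional probability), which is just the tower property of expectation. An alternative route avoiding Hoeffding is a direct Chernoff/MGF computation: $\E[e^{\lambda x_k y_k} \mid \mathbf{y}] = \cosh(\lambda) \le e^{\lambda^2/2}$, giving $\E[e^{\lambda S}\mid\mathbf{y}] \le e^{d\lambda^2/2}$, and then optimizing $\lambda = \epsilon$ in the Markov bound $\Pr(S > \epsilon d) \le e^{-\lambda \epsilon d} e^{d\lambda^2/2}$ yields $e^{-\epsilon^2 d/2}$, doubled for the two-sided bound. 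I would likely present this second version since it is fully self-contained and makes the constant $\tfrac12$ transparent.
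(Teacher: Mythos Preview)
Your proposal is correct and follows essentially the same route as the paper: observe that $Z_k = x_k y_k$ are i.i.d.\ Rademacher, apply Hoeffding's inequality to $S = \sum_k Z_k$, and substitute $t = \epsilon d$. Your conditioning argument and alternative MGF computation are slightly more careful than the paper's three-line proof, but the core idea is identical.
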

\begin{proof}
Each product $Z_k=x_k y_k$ is i.i.d.\ Rademacher with $\mathbb{E}[Z_k]=0$
and $|Z_k|\le 1$.
By Hoeffding’s inequality,
$\Pr\!\left(\left|\sum_{k=1}^d Z_k\right|>\epsilon d\right)\le
2\exp(-\epsilon^2 d/2)$.
Divide both sides by $d$ to obtain the claim.
\end{proof}

\begin{lemma}[Closure under binding]
\label{lem:binding}
Let $\mathbf{r}_1,\dots,\mathbf{r}_n$ be independent Rademacher hypervectors
and define binding (element-wise product)
$\mathbf{p}=\mathbf{r}_1\odot\cdots\odot\mathbf{r}_n$.
Then $\mathbf{p}$ is also a Rademacher hypervector.
Moreover, if $\mathbf{s}$ is independent of at least one $\mathbf{r}_i$ used in
$\mathbf{p}$, then $\mathbf{p}$ and $\mathbf{s}$ behave as independent
Rademacher hypervectors and
$\mathbb{E}[\cos(\mathbf{p},\mathbf{s})]=0$.
\end{lemma}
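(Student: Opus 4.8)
The statement has two parts: (a) the element-wise product of independent Rademacher hypervectors is again Rademacher, and (b) if $\mathbf{s}$ is independent of at least one factor $\mathbf{r}_i$ appearing in $\mathbf{p}$, then $\mathbf{p}$ and $\mathbf{s}$ are independent and $\mathbb{E}[\cos(\mathbf{p},\mathbf{s})]=0$. The plan is to argue coordinatewise, since the binding operator acts entrywise and all entries are i.i.d.\ within and across hypervectors.

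For part (a), I would fix a coordinate $k$ and note that $p_k = \prod_{j=1}^{n} r_{j,k}$ is a product of $n$ independent signs. A product of independent $\{-1,+1\}$-valued symmetric random variables is again symmetric $\{-1,+1\}$-valued (one clean way: $\tfrac{1-r_{j,k}}{2}\sim\mathrm{Bernoulli}(1/2)$ independently, and the product's sign is determined by the parity $\sum_j \tfrac{1-r_{j,k}}{2} \bmod 2$, which is uniform on $\{0,1\}$ by independence — a standard fact that the XOR of independent fair bits is a fair bit). Hence $p_k$ is Rademacher. Independence across coordinates is inherited because the $r_{j,k}$ are independent across $k$, and $p_k$ depends only on the $k$-th coordinates. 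So $\mathbf{p}\in\{-1,+1\}^d$ has i.i.d.\ Rademacher entries.

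For part (b), suppose without loss of generality that $\mathbf{s}$ is independent of $\mathbf{r}_1$. Write $\mathbf{p} = \mathbf{r}_1 \odot \mathbf{q}$ where $\mathbf{q} = \mathbf{r}_2\odot\cdots\odot\mathbf{r}_n$. Conditioning on everything except $\mathbf{r}_1$ (i.e.\ on $\mathbf{q}$ and $\mathbf{s}$), the vector $\mathbf{p}$ equals $\mathbf{r}_1$ with its signs flipped according to the fixed pattern $\mathbf{q}$; since $\mathbf{r}_1$ is a uniform sign vector independent of the conditioning, $\mathbf{p}$ is still a uniform sign vector conditionally — and in particular its distribution does not depend on $\mathbf{s}$, giving independence of $\mathbf{p}$ and $\mathbf{s}$. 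The expectation claim then follows from linearity: $\mathbb{E}[\cos(\mathbf{p},\mathbf{s})] = \tfrac1d\sum_k \mathbb{E}[p_k s_k] = \tfrac1d\sum_k \mathbb{E}[r_{1,k}]\,\mathbb{E}[q_k s_k] = 0$ because $\mathbb{E}[r_{1,k}]=0$ and $r_{1,k}$ is independent of $q_k s_k$. (Using $\|\mathbf{p}\|=\|\mathbf{s}\|=\sqrt d$ to identify $\cos$ with the normalized inner product.)

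The only mildly delicate point — and the place I would be most careful in writing it up — is justifying "behave as independent Rademacher hypervectors" cleanly: it is a conditioning argument (flip-invariance of the uniform sign measure), not merely a statement about first moments, and one should make explicit that independence of $\mathbf{s}$ from a \emph{single} factor $\mathbf{r}_i$ suffices even when $\mathbf{s}$ may be correlated with the other factors. Everything else is routine once the parity/XOR fact for fair bits is invoked, so I would cite that as standard rather than re-derive it.
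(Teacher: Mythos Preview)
Your proposal is correct and follows essentially the same coordinatewise approach as the paper: products of independent Rademacher entries are Rademacher, and independence from one factor forces the cross-term $p_k s_k$ to have zero mean. Your conditioning/flip-invariance argument for part (b) is in fact more careful than the paper's brief justification, which simply asserts that $p_k s_k$ has zero mean and is bounded so that the Hoeffding bound from the preceding proposition applies.
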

\begin{proof}
Each coordinate $p_k=\prod_{i=1}^{n} r_{i,k}$ is a product of independent
Rademacher variables, hence Rademacher.
If $\mathbf{s}$ is independent of some $r_{j}$, then $p_k s_k$ has zero mean
and remains bounded, so the same Hoeffding-type bound as in
Proposition~\ref{prop:nearorth} applies.
\end{proof}

\begin{theorem}[Separation and error bound for hypervector retrieval]
\label{thm:separation}
Let the query hypervector be
$\mathbf{q}=\mathbf{r}_1\odot\cdots\odot\mathbf{r}_n$ and consider a candidate
set containing the true path $\mathbf{p}^\star=\mathbf{q}$ and
$M$ distractors $\{\mathbf{p}_i\}_{i=1}^{M}$, where each distractor differs
from $\mathbf{q}$ in at least one relation (thus satisfies
Lemma~\ref{lem:binding}).
Then for any $\epsilon\in(0,1)$ and $\delta\in(0,1)$, if
\[
d \;\ge\; \frac{2}{\epsilon^2}\,\log\!\Big(\frac{2M}{\delta}\Big),
\]
we have, with probability at least $1-\delta$,
\[
\cos(\mathbf{q},\mathbf{p}^\star)=1
\quad\text{and}\quad
\max_{1\le i\le M}\big|\cos(\mathbf{q},\mathbf{p}_i)\big|\le \epsilon .
\]
\end{theorem}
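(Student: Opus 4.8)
The plan is to split the statement into a deterministic part and a high-probability part. For the true path, $\mathbf{p}^\star=\mathbf{q}$ by construction, so $\cos(\mathbf{q},\mathbf{p}^\star)=\langle\mathbf{q},\mathbf{q}\rangle/\|\mathbf{q}\|^2=1$ with no randomness at all, which settles the first assertion. What remains is to control, uniformly over the $M$ distractors, the random similarities $\cos(\mathbf{q},\mathbf{p}_i)$, and the natural tool is a union bound built on the single-pair tail estimate of Proposition~\ref{prop:nearorth}.

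For a fixed distractor $\mathbf{p}_i$ I would first check that $(\mathbf{q},\mathbf{p}_i)$ satisfies the hypotheses of Proposition~\ref{prop:nearorth}. Since all entries are $\pm 1$ we have $\|\mathbf{q}\|=\|\mathbf{p}_i\|=\sqrt d$ and $\cos(\mathbf{q},\mathbf{p}_i)=\frac{1}{d}\sum_{k=1}^d q_k(p_i)_k$. Each coordinate product $q_k(p_i)_k$ is the product of the Rademacher relation factors of both paths; factors common to the two paths square to $1$ and drop out, and because $\mathbf{p}_i$ differs from $\mathbf{q}$ in at least one relation, at least one independent Rademacher factor survives. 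By Lemma~\ref{lem:binding} the surviving product is again Rademacher with mean zero, so $\cos(\mathbf{q},\mathbf{p}_i)$ is an average of $d$ i.i.d.\ mean-zero variables bounded by $1$, and Proposition~\ref{prop:nearorth} gives $\Pr(|\cos(\mathbf{q},\mathbf{p}_i)|>\epsilon)\le 2\exp(-\tfrac12\epsilon^2 d)$.

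Finally I would union-bound over $i=1,\dots,M$ to obtain $\Pr(\max_i|\cos(\mathbf{q},\mathbf{p}_i)|>\epsilon)\le 2M\exp(-\tfrac12\epsilon^2 d)$, require this to be at most $\delta$, and solve for $d$: the inequality $2M\exp(-\tfrac12\epsilon^2 d)\le\delta$ is equivalent to $\tfrac12\epsilon^2 d\ge\log(2M/\delta)$, i.e.\ $d\ge\tfrac{2}{\epsilon^2}\log(2M/\delta)$, precisely the assumed bound. On the complementary event, whose probability is at least $1-\delta$, every distractor similarity is at most $\epsilon$ while $\cos(\mathbf{q},\mathbf{p}^\star)=1$, which is the claim. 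This is a textbook concentration-plus-union-bound argument, so I do not anticipate a genuine obstacle; the only delicate point is the independence bookkeeping in the second paragraph—making sure that ``differs in at least one relation'' really produces a non-degenerate mean-zero Rademacher inner product rather than an accidental total cancellation of randomness—which is exactly what Lemma~\ref{lem:binding} is there to guarantee.
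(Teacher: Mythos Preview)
Your proposal is correct and follows exactly the paper's own proof: the deterministic equality $\cos(\mathbf{q},\mathbf{p}^\star)=1$, then Lemma~\ref{lem:binding} to reduce each distractor pair to independent Rademacher hypervectors, Proposition~\ref{prop:nearorth} for the single-pair tail, and a union bound over the $M$ distractors to solve for $d$. Your added remark about common factors squaring to $1$ is a useful clarification, but the structure and tools are identical to the paper's argument.
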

\begin{proof}
By construction, $\mathbf{p}^\star=\mathbf{q}$, hence cosine $=1$.
For each distractor $\mathbf{p}_i$, Lemma~\ref{lem:binding} implies that
$\mathbf{q}$ and $\mathbf{p}_i$ behave as independent Rademacher
hypervectors; applying Proposition~\ref{prop:nearorth},
$\Pr(|\cos(\mathbf{q},\mathbf{p}_i)|>\epsilon)\le 2e^{-\epsilon^2 d/2}$.
A union bound over $M$ distractors yields
$\Pr(\max_i|\cos(\mathbf{q},\mathbf{p}_i)|>\epsilon)\le
2M e^{-\epsilon^2 d/2}\le \delta$ under the stated condition on $d$.
\end{proof}

\section{Additional Proofs and Tail Bounds}
\label{app:proofs}

\begin{proof}[Details for Prop.~\ref{prop:near-orth}]
Here, we sketch the argument for the GHRR near-orthogonality bound used in the
main text.
We view each GHRR block as a unitary matrix with i.i.d.\ phase (or signed)
entries, so blockwise products preserve the unit norm and keep coordinates sub-Gaussian.
Let $X=\frac{1}{d}\sum_{j=1}^d \xi_j$ with $\xi_j$ i.i.d., mean zero, and
$\psi_2$-norm bounded.
Applying Hoeffding/Bernstein,
$\Pr(|X|\ge\epsilon)\le 2\exp(-c d \epsilon^2)$ for some absolute constant
$c>0$, which yields the stated result after $\ell_2$ normalization.
Unitary blocks ensure no variance blow-up under binding depth; see
also~\cite{plate1995holographic,kanerva2009hyperdimensional} for stability of
holographic codes. \qedhere
\end{proof}

\section{Optional Prompt-Based Schema Refinement}\label{app:prompt-refine}
As described in \Cref{sec:path_construction}, all results in \Cref{sec:exp} use only schema-based enumeration of relation schemas, without any additional LLM calls beyond the final reasoning step.
For completeness, we describe here an optional extension that refines schema plans with a lightweight prompt.

Given a small set of candidate relation schemas $\{z^{(1)}, \dots, z^{(M)}\}$ obtained from enumeration, we first verbalize each schema into a short natural-language description (e.g., by mapping each relation type $r$ to a phrase and concatenating them). We then issue a single prompt of the form:

\begin{quote}
\small
Given the question $q$ and the following candidate relation patterns: (1) \texttt{[schema 1]}, (2) \texttt{[schema 2]}, \dots, which $K$ patterns are most relevant for answering $q$? Please output only the indices.
\end{quote}

The LLM outputs a small subset of indices, which we use to select the top-$K$ schemas $\{z^{(i_1)},\dots,z^{(i_K)}\}$. These refined schemas are then instantiated into concrete KG paths and encoded into hypervectors exactly as in the main method.

We emphasize that this refinement is \emph{not} used in any of our reported experiments, and that it can be implemented with at most one additional short LLM call per query. The main system studied in this paper, therefore, remains a single-call KG-LLM pipeline in all empirical results.

\section{Dataset Introduction}
\label{appendix:datasets}
We provide detailed descriptions of the three benchmark datasets used in our experiments:

WebQuestionsSP (WebQSP).
\noindent\textbf{WebQuestionsSP (WebQSP)}~\citep{yih2016value} consists of $4{,}737$ questions, where each question is manually annotated with a topic entity and a SPARQL query over Freebase. The answer entities are within a maximum of $2$ hops from the topic entity. Following prior work~\citep{sun2018open}, we use the standard train/validation/test splits released by GraftNet and the same Freebase subgraph for fair comparison.

\noindent\textbf{Complex WebQuestions–SP (CWQ-SP)}~\citep{talmor2018web} is the Freebase/SPARQL–annotated variant of CWQ, aligning each question to a topic entity and an executable SPARQL query over a cleaned Freebase subgraph. Questions are created by compositional expansions of WebQSP (adding constraints, joins, and longer paths), and typically require up to $4$-hop reasoning. We use the standard train/dev/test split released with CWQ-SP for fair comparison.

\noindent\textbf{GrailQA}~\citep{gu2021beyond} is a large-scale KGQA benchmark with $64{,}331$ questions. It focuses on evaluating generalization in multi-hop reasoning across three distinct settings: i.i.d., compositional, and zero-shot. Each question is annotated with a corresponding logical form and answer, and the underlying KG is a cleaned subset of Freebase. We follow the official split provided by the authors for fair comparison. \emph{In our experiments, we evaluate on the official \textbf{dev} set. The dev set is the authors’ held-out split from the same cleaned Freebase graph and mirrors the three generalization settings; it is commonly used for ablations and model selection when the test labels are held out.}

We follow the unified Freebase protocol~\citep{bollacker2008freebase}, which contains approximately $88$ million entities, $20$ thousand relations, and $126$ million triples. The official Hits@1/F1 scripts. 
For \textbf{GrailQA}, numbers in the main results are reported on the \textbf{dev} split (and additionally on its \emph{IID} subset); many recent works adopt dev evaluation due to test server restrictions. WebQSP has no official dev split under this setting. Additional statistics, including the number of reasoning hops and answer entities, are shown in~\Cref{tab:dataset_stats}.

\begin{table}[t]
\centering
\small
\begin{threeparttable}
\setlength{\tabcolsep}{6pt}
\renewcommand{\arraystretch}{1.15}
\begin{tabular}{l||c c c c c}
\Xhline{1.2pt}
\rowcolor{tablehead!20}\textbf{Dataset} & \textbf{Train} & \textbf{Dev} & \textbf{Test} & \textbf{Typical hops} & \textbf{KG} \\
\hline\hline
\rowcolor{gray!08}WebQSP~\citep{yih2016value} & 3{,}098 & -- & 1{,}639 & 1--2 & Freebase \\
CWQ~\citep{talmor2018web} & 27{,}734 & 3{,}480 & 3{,}475 & 2--4 & Freebase \\
\rowcolor{gray!08}GrailQA~\citep{gu2021beyond} & 44{,}337 & 6{,}763 & 13{,}231 & 1--4 & Freebase \\
\Xhline{1.2pt}
\end{tabular}
\end{threeparttable}
\caption{Statistics of Freebase-based KGQA datasets used in our experiments.}\label{tab:dataset_stats}
\end{table}

\section{Detailed Baseline Descriptions}
\label{appendix:baselines}
We categorize the baseline methods into four groups and describe each group below.
\vspace{-1em}
\subsection{Embedding-based methods}
\begin{fullitemize}
\item \textbf{KV-Mem}~\citep{miller2016key} uses a key-value memory architecture to store knowledge triples and performs multi-hop reasoning through iterative memory operations.  
\item \textbf{EmbedKGQA}~\citep{saxena2020improving} formulates KGQA as an entity-linking task and ranks entity embeddings using a question encoder.  
\textbf{NSM}~\citep{he2021improving} adopts a sequential program execution framework over KG relations, learning to construct and execute reasoning paths.  
\item \textbf{TransferNet}~\citep{shi2021transfernet} builds on GraftNet by incorporating both relational and text-based features, enabling interpretable step-wise reasoning over entity graphs.
\end{fullitemize}
\vspace{-1em}
\subsection{Retrieval-augmented methods}
\begin{fullitemize}
\item \textbf{GraftNet}~\citep{sun2018open} retrieves question-relevant subgraphs and applies GNNs for reasoning over linked entities.  
\item \textbf{SR+NSM}~\citep{zhang2022subgraph} retrieves relation-constrained subgraphs and runs NSM over them to generate answers.  
\item \textbf{SR+NSM+E2E}~\citep{zhang2022subgraph} further optimizes SR+NSM via end-to-end training of the retrieval and reasoning modules.  
\item \textbf{UniKGQA}~\citep{jiang2022unikgqa} unifies entity retrieval and graph reasoning into a single LLM-in-the-loop architecture, achieving strong performance with reduced pipeline complexity.
\end{fullitemize}
\vspace{-1em}
\subsection{Pure LLMs}
\begin{fullitemize}
\item \textbf{ChatGPT}~\citep{ouyang2022training}, \textbf{Davinci-003}~\citep{ouyang2022training}, and \textbf{GPT-4}~\citep{achiam2023gpt} serve as closed-book baselines using few-shot or zero-shot prompting.  
\item \textbf{StructGPT}~\citep{jiang2023structgpt} generates structured reasoning paths in natural language form, then executes them step by step.  
\item \textbf{ROG}~\citep{luo2023reasoning} reasons over graph-based paths with alignment to LLM beliefs.  
\item \textbf{Think-on-Graph}~\citep{sun2023think} prompts the LLM to search symbolic reasoning paths over a KG and use them for multi-step inference.
\end{fullitemize}
\vspace{-1em}
\subsection{LLMs + KG methods}
\begin{fullitemize}
\item \textbf{GoG}~\citep{xu2024generate} adopts a plan-then-retrieve paradigm, where an LLM generates reasoning plans and a KG subgraph is retrieved accordingly.  
\item \textbf{KG-Agent}~\citep{jiang2024kg} turns the KGQA task into an agent-style decision process using graph environment feedback.  
\item \textbf{FiDeLiS}~\citep{sui2024fidelis} fuses symbolic subgraph paths with LLM-generated evidence, filtering hallucinated reasoning chains.  
\item \textbf{{\ourmethod}} (ours) proposes a vector-symbolic integration pipeline where top-$K$ relation paths are selected by vector matching and adjudicated by an LLM, combining symbolic controllability with neural flexibility.
\end{fullitemize}

\section{Detailed Experimental Setups}
\label{app:setup}
We follow a unified evaluation protocol: all methods are evaluated on the
Freebase KG with the official \textit{Hits@1}/\textit{F1} scripts for WebQSP,
CWQ, and GrailQA (dev, IID), so that the numbers are directly comparable across
systems.
Whenever possible, we adopt the official results reported by prior work under
the same setting.
Concretely, we take numbers for KV-Mem \citep{miller2016key}, GraftNet
\citep{sun2018open}, EmbedKGQA \citep{saxena2020improving}, NSM
\citep{he2021improving}, TransferNet \citep{shi2021transfernet}, SR+NSM and its
end-to-end variant (SR+NSM+E2E) \citep{zhang2022subgraph}, UniKGQA
\citep{jiang2022unikgqa}, RoG \citep{luo2023reasoning}, StructGPT
\citep{jiang2023structgpt}, Think-on-Graph \citep{sun2023think}, GoG
\citep{xu2024generate}, and FiDeLiS \citep{sui2024fidelis} from their papers or
consolidated tables under equivalent Freebase protocols.
We further include a pure-LLM category (ChatGPT, Davinci-003, GPT-4) using the
unified results reported by KG-Agent \citep{jiang2024kg}; note that its GrailQA scores are on the dev split.
For KG-Agent itself, we use the official numbers reported in its paper~\citep{jiang2024kg}.

For {\ourmethod}, all LLMs and sentence encoders are used \emph{off the shelf} with
no fine-tuning.
The block size $m$ of the unitary blocks is chosen from a small range motivated by the VSA literature; following GHRR, we fix a small block size $m = 4$ and
primarily tune the overall dimensionality $d$.
Prior work on GHRR~\citep{yeung2024generalized} shows that, for a fixed total
dimension $d$, moderate changes in $m$ trade off non-commutativity and
saturation behaviour, but do not lead to instability.
In our experiments, we therefore treat $d$ as the main tuning parameter (set on
the dev split), while fixing $m$ across all runs.
Additional hyperparameters for candidate enumeration and calibration are
detailed in \Cref{app:ana_eff,app:calib}.

\subsection{Additional Analytic Efficiency}\label{app:ana_eff}
\begin{table}[h]
\centering
\small
\renewcommand{\arraystretch}{1.1}
\resizebox{\linewidth}{!}{
\begin{tabular}{l||cccc}
\Xhline{1.2pt}
\rowcolor{tablehead!20}
\textbf{Method} & \textbf{\# LLM calls / query} & \textbf{Planning depth} & \textbf{Retrieval fanout/beam} & \textbf{Executor/Tools} \\
\hline\hline
\rowcolor{gray!10}KV-Mem \citep{miller2016key}              & $0$ & multi-hop (learned) & moderate & Yes (neural mem) \\
EmbedKGQA \citep{saxena2020improving}     & $0$ & multi-hop (seq)     & moderate & No \\
\rowcolor{gray!10}NSM \citep{he2021improving}               & $0$ & multi-hop (neural)  & moderate & Yes (neural executor) \\
TransferNet \citep{shi2021transfernet}    & $0$ & multi-hop           & moderate & No \\
\rowcolor{gray!10}GraftNet \citep{sun2018open}              & $0$ & multi-hop           & graph fanout        & No \\
SR+NSM \citep{zhang2022subgraph}          & $0$ & multi-hop           & subgraph (beam)     & Yes (neural exec) \\
\rowcolor{gray!10}SR+NSM+E2E \citep{zhang2022subgraph}      & $0$ & multi-hop           & subgraph (beam)     & Yes (end-to-end) \\
ChatGPT \citep{ouyang2022training}        & $1$ & $0$                 & n/a                 & No \\
\rowcolor{gray!10}Davinci-003 \citep{ouyang2022training}    & $1$ & $0$                 & n/a                 & No \\
GPT-4 \citep{achiam2023gpt}              & $1$ & $0$                 & n/a                 & No \\
\rowcolor{gray!10}UniKGQA \citep{jiang2022unikgqa}          & $1\text{--}2$ & shallow & small/merged        & No (unified model) \\
StructGPT \citep{jiang2023structgpt}      & $1\text{--}2$ & $1$     & n/a                 & Yes (tool use) \\
\rowcolor{gray!10}RoG \citep{luo2023reasoning}              & $\approx d \!\times\! b$ & $d$ & $b$ (per step)    & No (LLM scoring) \\
Think-on-Graph \citep{sun2023think}       & $3\text{--}6$ & multi   & small/beam          & Yes (plan \& react) \\
\rowcolor{gray!10}GoG \citep{xu2024generate}                & $3\text{--}5$ & multi   & small/iterative     & Yes (generate-retrieve loop) \\
KG-Agent \citep{jiang2024kg}              & $3\text{--}8$ & multi   & small               & Yes (agent loop) \\
\rowcolor{gray!10}FiDeLiS \citep{sui2024fidelis}            & $1\text{--}3$ & shallow & small               & Optional (verifier) \\
\rowcolor[HTML]{D7F6FF}
\textbf{\ourmethod (ours)}                    & $\mathbf{1}$ (final only) & $\mathbf{0}$ & vector ops only & No (vector ops) \\
\Xhline{1.2pt}
\end{tabular}}
\caption{Full analytical comparison (no implementation). Ranges reflect algorithm design; $d$ and $b$ denote planning depth and beam/fanout as specified in RoG, which uses beam-search with $B=3$ and path length bounded by dataset hops (WebQSP$\leq2$, CWQ$\leq4$).}
\label{tab:analytic_full}
\end{table}

\subsection{More Hyperparameter Tuning Details}\label{app:calib}
We sweep $\alpha,\beta \in \{0,0.1,0.2,\dots,0.5\}$ and $\lambda \in \{0.6,0.7,0.8,0.9\}$ and pick the best-performing triple on the validation Hits@$1$ for each dataset.
\begin{table}[h]
\centering
\begin{tabular}{l||ccc}
\Xhline{1.2pt}
\rowcolor{tablehead!20}\textbf{Dataset} & $\mathbf{\alpha}$ & $\mathbf{\beta}$ & \textbf{$\lambda$} \\
\hline\hline
\rowcolor{gray!08}WebQSP & 0.2 & 0.1 & 0.8 \\
CWQ    & 0.3 & 0.1 & 0.8 \\
\rowcolor{gray!08}GrailQA & 0.2 & 0.2 & 0.8 \\
\Xhline{1.2pt}
\end{tabular}
\caption{Calibration hyperparameters $(\alpha,\beta,\lambda)$ used for each dataset.}
\label{tab:calib}
\end{table}

\clearpage
\section{Additional Experiments}
\subsection{Scoring Metric}
\begin{figure*}[h]
\vspace{-0.6em}
\centering
\includegraphics[width=\linewidth]{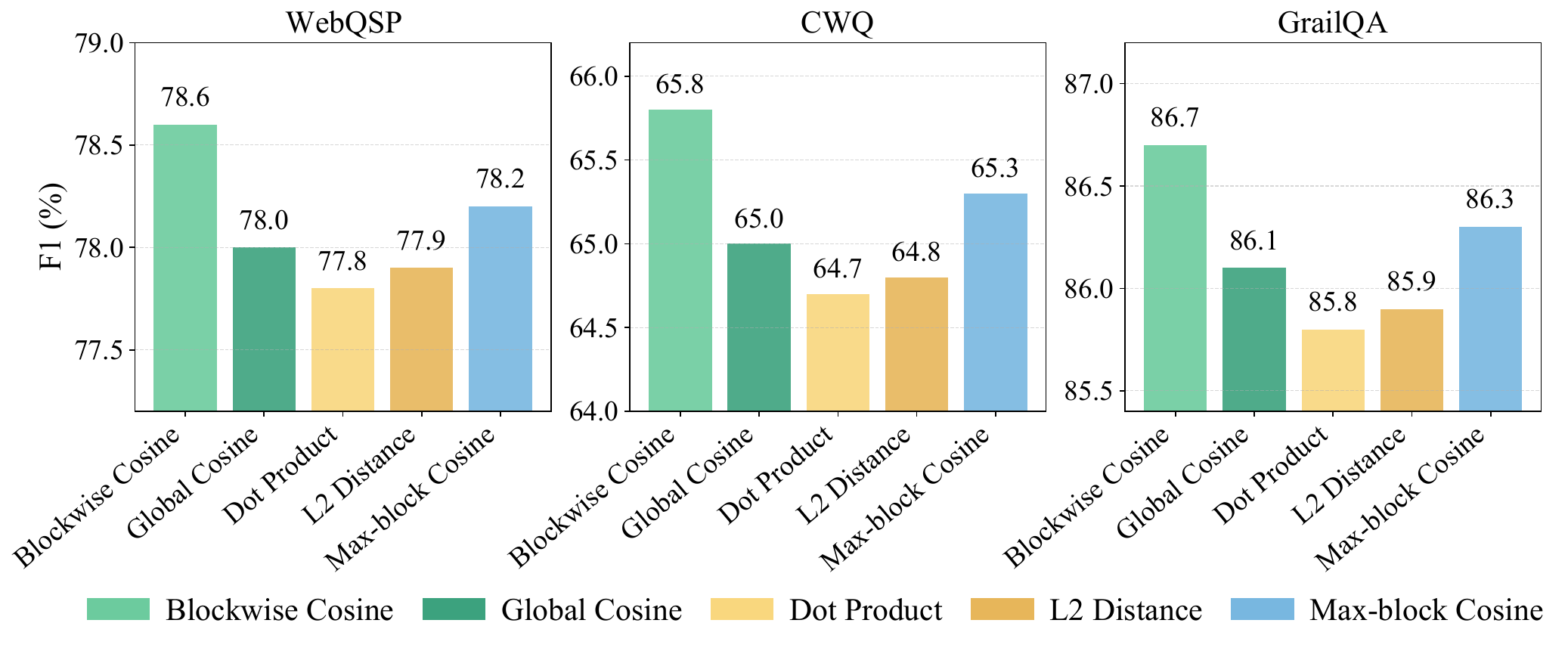}
\vspace{-2em}
\caption{\textbf{Scoring measurement ablation.} We evaluate F1 (\%) on WebQSP, CWQ, and GrailQA using different scoring strategies in our model. \ourmethod achieves the best or competitive results when using blockwise cosine similarity, highlighting its effectiveness in capturing fine-grained matching signals across vector blocks.}
\label{fig:attack}
\end{figure*}

\subsection{Text-projection Variant}\label{app:ablation_text_pro}
We use SBERT as the sentence encoder, so $d_t$ is fixed to the encoder hidden size $768$, and $P$ is sampled once from $\mathcal{N}(0, 1/d_t)$ and kept fixed for all experiments.
\begin{table}[h]
\vspace{-0.5em}
\centering
\small
\renewcommand{\arraystretch}{1.0}
\begin{tabular}{l||cc}
\Xhline{1.2pt}
\rowcolor{tablehead!20}\textbf{Final step} & \textbf{WebQSP} & \textbf{CWQ} \\
\hline\hline
\rowcolor{gray!10}{\ourmethod} (text-projection query)             & 83.4 & 69.8 \\
{\ourmethod} (plan-based, default)  & \textbf{86.2} & \textbf{71.5} \\
\Xhline{1.2pt}
\end{tabular}
\caption{Comparison of query encoding variants in {\ourmethod}. We report Hits@$1$ on WebQSP and CWQ for the default plan-based encoding and the text-projection variant.}
\label{tab:text_pro_var}
\end{table}

\clearpage

\subsection{Additional Visualization}
\begin{figure*}[h]
\vspace{-0.6em}
\centering
\includegraphics[width=\linewidth]{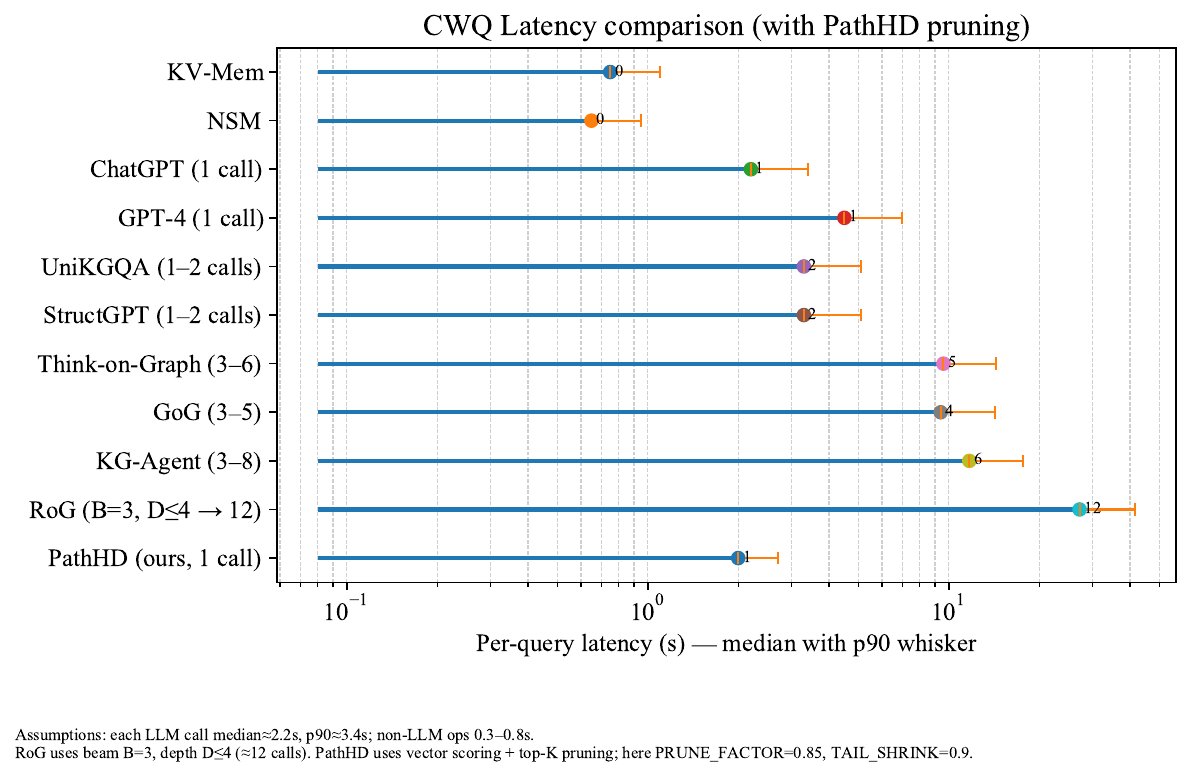}
\vspace{-1.9em}
\caption{CWQ latency comparison (lollipop). Dots indicate median per-query latency; right whiskers show the 90th percentile (p90). The x-axis is log-scaled. Values are \emph{estimated} under a unified setup: per-LLM-call median $\approx$ 2.2\,s and p90 $\approx$ 3.4\,s; non-LLM operations add 0.3-0.8\,s. RoG follows beam width $B{=}3$ with depth bounded by dataset hops ($D{\le}4$, $\approx$12 calls), whereas \textbf{\ourmethod} uses a single LLM call plus vector operations for scoring.}\label{fig:lollipop}
\vspace{-1.7em}
\end{figure*}
\clearpage
\subsection{Effect of Backbone Models}
Performance across different LLM backbones is shown in \Cref{tab:backbone}.
\begin{table}[h]
\centering
\small
\setlength{\tabcolsep}{2pt}
\renewcommand{\arraystretch}{1.15}
\caption{Performance across different LLM backbones. Each block fixes the backbone and varies the reasoning framework: a pure LLM control (CoT), our single-call \ourmethod, and 1-2 multi-step LLM+KG baselines. Metrics follow the unified Freebase setup.}
\label{tab:backbone}
\resizebox{\textwidth}{!}{
\begin{tabular}{l l cc c c}
\Xhline{1.2pt}
\rowcolor{tablehead!20}
\textbf{Backbone} & \textbf{Method} & \textbf{WebQSP} & \textbf{CWQ} & \textbf{GrailQA (F1)} & \textbf{\#Calls} \\
\rowcolor{tablehead!20}
 &  & Hits@1 / F1 & Hits@1 / F1 & Overall / IID & (/query) \\
\hline\hline
\multirow{4}{*}{GPT-4 (API)}
  & CoT \citep{wei2022chain}             & 73.2 / 62.3 & 55.6 / 49.9 & 31.7 / 25.0 & 1 \\
  & RoG \citep{luo2023reasoning}         & 85.7 / 70.8 & 62.6 / 56.2 & -- / -- & $\approx 12$ \\
  & KG-Agent \citep{jiang2024kg}    & 83.3 / \textbf{81.0} & 72.2 / \textbf{69.8} & 86.1 / 92.0 & 3--8 \\
  & \textbf{\ourmethod} (single-call)    & \textbf{86.2} / 78.6 & \textbf{71.5} / 65.8 & \textbf{86.7} / \textbf{92.4} & \textbf{1} \\
\hline
\multirow{4}{*}{GPT\mbox{-}3.5 / ChatGPT}
  & CoT \citep{wei2022chain}             & 67.4 / 59.3 & 47.5 / 43.2 & 25.3 / 19.6 & 1 \\
  & StructGPT \citep{jiang2023structgpt} & 72.6 / 63.7 & 54.3 / 49.6 & 54.6 / 70.4 & 1--2 \\
  & RoG \citep{luo2023reasoning}         & 85.0 / 70.2 & 61.8 / 55.5 & -- / -- & $\approx 12$ \\
  & \textbf{\ourmethod} (single-call)    & 85.6 / 78.0 & 70.8 / 65.1 & 85.9 / 91.7 & \textbf{1} \\
\hline
\multirow{4}{*}{Llama\mbox{-}3\mbox{-}8B\mbox{-}Instruct (open)}
  & CoT (prompt-only)                    & 62.0 / 55.0 & 43.0 / 40.0 & 20.0 / 16.0 & 1 \\
  & ReAct\mbox{-}Lite (retrieval+CoT)    & 70.5 / 62.0 & 52.0 / 47.5 & 48.0 / 62.0 & 3--5 \\
  & BM25+LLM\mbox{-}Verifier (1$\times$) & 74.5 / 66.0 & 55.0 / 50.0 & 52.0 / 66.0 & 1 \\
  & \textbf{\ourmethod} (single-call)    & 84.8 / 77.2 & 69.8 / 64.2 & 84.9 / 90.9 & \textbf{1} \\
\Xhline{1.2pt}
\end{tabular}}
\vspace{-0.3em}
\end{table}

\subsection{Prompt Sensitivity of the LLM Adjudicator}
\label{app:prompt-sensitivity}

Since {\ourmethod} relies on a single LLM call to adjudicate among the Top-$K$ candidate paths, it is natural to ask how sensitive the system is to the exact phrasing of this adjudication prompt.
To investigate this, we compare our default adjudication prompt (Prompt A) with a slightly rephrased variant (Prompt B) that uses different wording but conveys the same task description.\footnote{For example, Prompt A asks the model to ``select the most plausible reasoning path and answer the question based on it'', whereas Prompt B paraphrases this as ``choose the best supporting path and use it to answer the question''.}

\begin{table}[h]
    \centering
    \begin{tabular}{l||ccc}
\Xhline{1.2pt}
        \rowcolor{tablehead!20} Prompt & WebQSP & CWQ & GrailQA (Overall / IID) \\
        \hline\hline
        \rowcolor{gray!10} Prompt A (default) & $\mathbf{86.2}$ / $\mathbf{78.6}$ & $\mathbf{71.5}$ / $\mathbf{65.8}$ & $\mathbf{86.7}$ / $\mathbf{92.4}$ \\
        Prompt B (paraphrased) & 85.7 / 78.3 & 70.9 / 63.4 & 85.2 / 90.8 \\
\Xhline{1.2pt}
    \end{tabular}
     \caption{Prompt sensitivity of the LLM adjudicator.
    We compare the default adjudication prompt (Prompt A) with a paraphrased variant (Prompt B).
    Numbers are Hits@1 and F1 for WebQSP and CWQ, and Overall / IID F1 for GrailQA.}
    \label{tab:prompt-sensitivity}
\end{table}

\Cref{tab:prompt-sensitivity} reports \textit{Hits@1} and \textit{F1} on the three datasets under these two prompts.
We observe that while minor prompt changes can occasionally flip individual predictions, the overall performance remains very close for all datasets, and the qualitative behavior of the path-grounded rationales is also stable.
This suggests that, in our setting, {\ourmethod} is reasonably robust to small, natural variations in the adjudication prompt.

\section{Detailed Introduction of the Modules}\label{app:binding}
\subsection{Binding Operations}
Below, we summarize the binding operators considered in our system and ablations.
All bindings produce a composed hypervector $\mathbf{s}$ from two inputs
$\mathbf{x}$ and $\mathbf{y}$ of the same dimensionality.

\paragraph{(1) XOR / Bipolar Product (\emph{commutative}).}
For binary hypervectors $\mathbf{x},\mathbf{y}\in\{0,1\}^d$,
\[
\mathbf{s}=\mathbf{x}\oplus\mathbf{y},\qquad
s_i=(x_i+y_i)\bmod 2 .
\]
Under the bipolar code $\{-1,+1\}$, XOR is equivalent to element-wise
multiplication:
\[
s_i = x_i\cdot y_i,\qquad x_i,y_i\in\{-1,+1\}.
\]
This is the classical \emph{commutative bind} baseline used in our ablation.

\paragraph{(2) Real-valued Element-wise Product (\emph{commutative}).}
For real vectors $\mathbf{x},\mathbf{y}\in\mathbb{R}^d$,
\[
\mathbf{s}=\mathbf{x}\odot\mathbf{y},\qquad s_i=x_i\,y_i .
\]
Unbinding is approximate by element-wise division (with small $\epsilon$ for
stability): $x_i\approx s_i/(y_i+\epsilon)$.
This is another variant of the \emph{commutative bind}.

\paragraph{(3) HRR: Circular Convolution (\emph{commutative}).}
For $\mathbf{x},\mathbf{y}\in\mathbb{R}^d$,
\[
\mathbf{s}=\mathbf{x}\circledast\mathbf{y},\qquad
s_k=\sum_{i=0}^{d-1} x_i\,y_{(k-i)\bmod d}.
\]
Approximate unbinding uses circular correlation:
\[
\mathbf{x}\approx \mathbf{s}\circledast^{-1}\mathbf{y},\qquad
x_i\approx \sum_{k=0}^{d-1} s_k\,y_{(k-i)\bmod d}.
\]
This is the \emph{Circ.\ conv} condition in our ablation.

\paragraph{(4) FHRR / Complex Phasor Product (\emph{commutative}).}
Let $\mathbf{x},\mathbf{y}\in\mathbb{C}^{d}$ with unit-modulus components
$x_i=e^{i\phi_i}$, $y_i=e^{i\psi_i}$.
Binding is element-wise complex multiplication
\[
\mathbf{s}=\mathbf{x}\odot\mathbf{y},\qquad s_i = x_i y_i = e^{i(\phi_i+\psi_i)} ,
\]
and unbinding is conjugation: $\mathbf{x}\approx \mathbf{s}\odot \mathbf{y}^\ast$.
FHRR is often used as a complex analogue of HRR.

\paragraph{(5) Block-diagonal GHRR (\emph{non-commutative}, ours).}
We use Generalized HRR with block-unitary components.
A hypervector is a block vector $\mathbf{H}=[A_1;\dots;A_D]$,
$A_j\in \mathrm{U}(m)$ (so total dimension $d=Dm^2$ when flattened).
Given $\mathbf{X}=[X_1;\dots;X_D]$ and $\mathbf{Y}=[Y_1;\dots;Y_D]$,
binding is the block-wise product
\[
\mathbf{Z}=\mathbf{X}\,\bbind\,\mathbf{Y},\qquad Z_j=X_j Y_j \;\;(j=1,\dots,D).
\]
Since matrix multiplication is generally non-commutative ($X_jY_j\neq Y_jX_j$),
GHRR preserves order/direction of paths.
Unbinding exploits unitarity:
\[
X_j \approx Z_j Y_j^\ast,\qquad Y_j \approx X_j^\ast Z_j .
\]
This \textbf{Block-diag (GHRR)} operator is our default choice and achieves the
best performance in the operation study (\Cref{tab:ablate-binding}), compared
to \emph{Comm.\ bind} and \emph{Circ.\ conv}.

\subsection{Additional Case Study}\label{app:case}
\begin{table*}[h]
\centering
\small
\setlength{\tabcolsep}{6pt}
\renewcommand{\arraystretch}{1.12}
\newcolumntype{Y}{>{\hsize=2\hsize\arraybackslash}X}
\newcommand{\code}[1]{\texttt{\footnotesize #1}}
\newcommand{\good}{\textcolor{ForestGreen}{\ding{51}}}  %
\newcommand{\bad}{\textcolor{BrickRed}{\ding{55}}}      %
\caption{Case studies for \textbf{{\ourmethod}} with an \emph{illustrative} display of candidates. For each query, we list the four highest-scoring relation paths (Top-4) for readability, then prune to $K=2$ before the vector-only choice and a single-LLM adjudication.}
\label{tab:appendix-cases}
\vspace{-1em}
\scalebox{1.0}{
\begin{tabularx}{\textwidth}{lX}
\Xhline{1.2pt}
\multicolumn{2}{l}{\textbf{Case 3:} \emph{where are the gobi desert located on a map}}\\
\hline
\multirow{3}{*}{\textbf{Top-3 candidates}} &
1) \code{location.location.containedby} (0.3410)\\
& 2) \code{location.location.partially\_containedby} (0.3335) \\
& 3) \code{location.location.contains} (0.3255) \\
\hline
\multirow{2}{*}{\makecell{\textbf{Top-$K$ after pruning}\\ (K{=}2)}} &\code{containedby} \\
&\code{partially\_containedby} \\
\hline
\textbf{Vector-only (no LLM)} &
Pick \code{containedby} \good\ — returns parent region; predicts \emph{Asia}. \\
\hline
\multirow{2}{*}{\textbf{1$\times$LLM adjudication}} &
\emph{Rationale:} ``Gobi Desert lies across \emph{Mongolia} and \emph{China}, which are \emph{contained by} the continent of \emph{Asia}; ‘contains’ would flip direction.'' \\
\hline
\textbf{Final Answer / GT} & \textbf{Asia (predict)} / \textbf{Asia} \good \\
\hline
\hline
\multicolumn{2}{l}{\textbf{Case 4:} \emph{in which continent is germany}}\\
\hline
\multirow{3}{*}{\textbf{Top-3 candidates}} &
1) \code{location.location.containedby} (0.3405) \\
& 2) \code{base.locations.countries.continent} (0.3325) \\
& 3) \code{location.location.contains} (0.3270) \\
\hline
\multirow{2}{*}{\makecell{\textbf{Top-$K$ after pruning}\\ (K{=}2)}} &\code{containedby}\\
&\code{countries.continent} \\
\hline
\multirow{2}{*}{\textbf{Vector-only (no LLM)}} &
Pick \code{containedby} \bad\ — tends to surface \emph{EU} or administrative parents, hurting precision. \\
\hline
\multirow{2}{*}{\textbf{1$\times$LLM adjudication}}  &
\emph{Rationale:} ``The target is a country $\rightarrow$ continent query; use \code{countries.continent} to directly map \emph{Germany} to \emph{Europe}.'' \\
\hline
\textbf{Final Answer / GT} & \textbf{Europe (predict)}  / \textbf{Europe} \good \\
\hline
\hline
\multicolumn{2}{l}{\textbf{Case 5:} \emph{what is the hometown of the person who said ``Forgive your enemies, but never forget their names?''}}\\
\hline
\multirow{3}{*}{\textbf{Top-3 candidates}} &
1) \code{quotation.author $\rightarrow$ person.place\_of\_birth} (0.3380) \\
& 2) \code{family.members $\rightarrow$ person.place\_of\_birth} (0.3310) \\
& 3) \code{quotation.author $\rightarrow$ location.people\_born\_here} (0.3310) \\
\hline
\multirow{2}{*}{\makecell{\textbf{Top-$K$ after pruning}\\ (K{=}2)}} &\code{quotation.author $\rightarrow$ place\_of\_birth}\\
&\code{family.members $\rightarrow$ place\_of\_birth} \\
\hline
\multirow{2}{*}{\textbf{Vector-only (no LLM)}} &
Pick \code{quotation.author $\rightarrow$ place\_of\_birth} \good\ — direct trace from quote to person to birthplace. \\
\hline
\multirow{2}{*}{\textbf{1$\times$LLM adjudication}} &
\emph{Rationale:} ``The quote’s author is key; once identified, linking to their birthplace via person-level relation gives the hometown.'' \\
\hline
\textbf{Final Answer / GT} & \textbf{Brooklyn (predict)} / \textbf{Brooklyn} \good \\

\hline
\hline
\multicolumn{2}{l}{\textbf{Case 6:} \emph{what is the name of the capital of Australia where the film ``The Squatter's Daughter'' was made}}\\
\hline
\multirow{5}{*}{\textbf{Top-3 candidates}} &
1) \code{film.film\_location.featured\_in\_films} (0.3360) \\
&2) \code{notable\_types} $\rightarrow$ \code{newspaper\_circulation\_area.newspapers} \\
&  $\rightarrow$ \code{newspapers} (0.3330) \\
&3) \code{film\_location.featured\_in\_films} $\rightarrow$ \\ 
&\code{bibs\_location.country} (0.3310) \\
\hline
\multirow{2}{*}{\makecell{\textbf{Top-$K$ after pruning}\\ (K{=}2)}} &\code{film.film\_location.featured\_in\_films} \\
&\code{notable\_types} $\rightarrow$ \code{newspaper\_circulation\_area.newspapers} \\
\hline
\multirow{2}{*}{\textbf{Vector-only (no LLM)}} &
Pick \code{film.film\_location.featured\_in\_films} \good\ — retrieves filming location; indirectly infers capital via metadata. \\
\hline
\multirow{3}{*}{\textbf{1$\times$LLM adjudication}}  &
\emph{Rationale:} ``The film's production location helps localize the city. Although not all locations are capitals, this film was made in Australia, where identifying the filming city leads to the capital.'' \\
\hline
\textbf{Final Answer / GT} & \textbf{Canberra (predict)} / \textbf{Canberra} \good \\

\Xhline{1.2pt}
\end{tabularx}}
\end{table*}
\Cref{tab:appendix-cases} presents \emph{additional} WebQSP case studies for {\ourmethod}. 
Unlike the main paper's case table (Top\mbox{-}4 candidates with pruning to $K{=}3$), this appendix 
visualizes the \textbf{Top\mbox{-}3} highest-scoring relation paths for readability and then prunes to 
\textbf{$K{=}2$} before a single-LLM adjudication.

Across the four examples (Cases~3-6), pruning to $K{=}2$ often retains the correct path and achieves
strong final answers after LLM adjudication. However, we also observe a typical failure mode of the
vector-only selector under $K{=}2$: when multiple plausible paths exist (e.g., country vs.\ continent,
or actor vs.\ editor constraints), the vector-only choice can become brittle and select a high-scoring
but \emph{underconstrained} path, after which the LLM must recover the correct answer using the remaining
candidate (see Case~4). In contrast, the main-paper setting with $K{=}3$ keeps one more candidate, which
\emph{more reliably preserves a constraint-satisfying path} (e.g., explicitly encoding actor or
continent relations). This extra coverage reduces reliance on the LLM to repair mistakes and improves
robustness under compositional queries.

While $K{=}2$ is cheaper and can work well in many instances, \textbf{$K{=}3$ offers a better
coverage-precision trade-off} on average: it mitigates pruning errors in compositional cases and lowers
the risk of discarding the key constraint path. This aligns with our main experimental choice of
$K{=}3$, which we use for all reported metrics in the paper.

\paragraph{Case-study note.}
For the qualitative case studies only, we manually verified the final entity answers using publicly available sources (e.g., film credits and encyclopedia entries).
This light-weight human verification was used \emph{solely} to present readable examples; it does not affect any quantitative metric.
All reported metrics (e.g., Hits@1 and F1) are computed from dataset-provided supervision and ground-truth paths without human annotation.

\end{document}